\newtheorem{theorem}{Theorem}
\newtheorem{lemma}{Lemma}
\newcommand{\model}{\textsc{CTkvr}}
\newcommand{\QCIVF}{\textsc{qcIVF}}
\newtheorem{observation}{{\bf Obs.}}
\newtheorem{insight}{{\bf Insight}}
\begin{document}

\title{\model{}: Efficient KV Cache Retrieval for Long-Context LLMs via Centroid-then-Token Indexing}


\author{Kuan Lu}
\authornotemark[1]
\affiliation{%
  \institution{Zhejiang University}
  \country{China}
}

\author{Shuhang Lin}
\authornotemark[1]
\affiliation{%
  \institution{Rutgers University}
  \country{USA}
}

\author{Sai Wu}
\affiliation{%
  \institution{Zhejiang University}
  \country{China}
}

\author{Yichen Yao}
\affiliation{%
  \institution{INFLY Tech}
  \country{China}
}

\author{Junhan Yang}
\affiliation{%
  \institution{INFLY Tech}
  \country{China}
}

\author{Huan Li}
\authornotemark[2]
\affiliation{%
  \institution{Zhejiang University}
  \country{China}
}

\author{Wei Chu}
\affiliation{%
  \institution{INFLY Tech}
  \country{China}
}

\author{Xu Yinghui}
\affiliation{%
  \institution{INFLY Tech}
  \country{China}
}

\author{Yuan Qi}
\affiliation{%
  \institution{INFLY Tech}
  \country{China}
}

\author{Gang Chen}
\affiliation{%
  \institution{Zhejiang University}
  \country{China}
}
\renewcommand{\shortauthors}{Trovato et al.}

\definecolor{commentcolor}{RGB}{0,114,189}
\SetKwComment{Comment}{\textcolor{commentcolor}{$\triangleright$} }{}
\newcommand{\commentstyle}[1]{\textcolor{commentcolor}{\textrm{\textit{#1}}}}

\definecolor{redcommentcolor}{RGB}{161,0,2} 
\SetKwComment{BigComment}{\textcolor{redcommentcolor}{$\bullet$} }{}
\newcommand{\bigcommentstyle}[1]{\textcolor{redcommentcolor}{\textrm{\textbf{#1}}}}

\definecolor{lightblue}{RGB}{173, 216, 230} 
\definecolor{mygray}{RGB}{210, 210, 210}
\sethlcolor{lightblue}
\begin{CCSXML}
<ccs2012>
   <concept>
       <concept_id>10010147.10010178.10010179</concept_id>
       <concept_desc>Computing methodologies~Natural language processing</concept_desc>
       <concept_significance>500</concept_significance>
       </concept>
 </ccs2012>
\end{CCSXML}

\ccsdesc[500]{Computing methodologies~Natural language processing}
\keywords{Inference Acceleration, Sparse Attention}

\received{20 February 2007}
\received[revised]{12 March 2009}
\received[accepted]{5 June 2009}

\begin{abstract}
Large language models (LLMs) are increasingly applied in long-context scenarios such as multi-turn conversations. 
However, long contexts pose significant challenges for inference efficiency, including high memory overhead from Key-Value (KV) cache and increased latency due to excessive memory accesses. 
Recent methods for dynamic KV selection struggle with trade-offs: block-level indexing degrades accuracy by retrieving irrelevant KV entries, while token-level indexing incurs high latency from inefficient retrieval mechanisms.
In this paper, we propose \model{}, a novel centroid-then-token KV retrieval scheme that addresses these limitations.
\model{} leverages a key observation: query vectors adjacent in position exhibit high similarity after Rotary Position Embedding (RoPE) and share most of their top-$k$ KV cache entries.
Based on this insight, \model{} employs a two-stage retrieval strategy: lightweight centroids are precomputed during prefilling for centroid-grained indexing, followed by token-level refinement for precise KV retrieval. This approach balances retrieval efficiency and accuracy.
To further enhance performance, we implement an optimized system for indexing construction and search using CPU-GPU co-execution.
Experimentally, \model{} achieves superior performance across multiple benchmarks with less than 1\% accuracy degradation. 
Meanwhile, \model{} delivers $3\times$ and $4\times$ throughput speedups on \texttt{Llama-3-8B} and \texttt{Yi-9B} at 96K context length across diverse GPU hardware.
\end{abstract}
\maketitle
\section{Introduction}
\label{sec:intro}

Large language models (LLMs) with long context windows, such as GPT \citep{openai2024gpt4technicalreport}, Llama \citep{grattafiori2024llama3herdmodels}, and Gemini \citep{geminiteam2024geminifamilyhighlycapable}, have showcased remarkable scalability of handling contexts of up to 1M tokens. 
This capability empowers LLMs to tackle complex tasks like multi-document question answering (QA) and information retrieval \citep{bai2024longbench}, advancing applications such as chatbots \citep{chatbot} and search engines \citep{xiong2024searchengineservicesmeet}.
Despite their potential, efficiently serving long-context LLMs poses significant challenges, particularly with the management of the Key-Value (KV) cache --- which stores intermediate Keys and Values activations to avoid re-computation.
Since generating each token requires accessing each KV pair, most methods store full KV cache on the GPU to minimize latency.
However, this storage budget scales linearly with the sequence length. Consequently, in long-context scenarios, as the batch size grows, the GPU's VRAM is rapidly consumed, creating a bottleneck that restricts throughput.

\if 0
To address these challenges, an effective LLM serving system for long-context inference with sparse attention should: (i) maintain (mostly) full KV cache accuracy, (ii) minimize prefilling and decoding latency, (iii) reduce GPU memory usage. 
As shown in \autoref{fig:methods}, existing methods, such as KV cache eviction and sparse attention with block-level or token-level indexing, have been explored but fail to meet all requirements.
\emph{KV cache eviction} methods \citep{h2o, li2024snapkv} reduce memory usage by discarding KV pairs based on specific policies. 
However, evicting KV cache leads to information loss and accuracy decline, especially in tasks like long outputs \citep{bai2024longwriter} and multi-turn conversations \citep{munkhdalai2024leavecontextbehindefficient}.
\emph{Sparse attention methods}~\cite{hooper2024squeezed,xiao2024infllm} retain the entire KV cache on the GPU and accelerate attention computation by dynamically selecting KV pairs for the current token.
Sparse attention methods can be categorized by indexing granularity: 
Block-level indexing \cite{sun2024shadowkv, tang2024quest} summarizes Key caches into block-level vectors during prefilling and retrieves blocks based on similarity during decoding. While efficient, this can introduce irrelevant KV entries within blocks, reducing accuracy.
Token-level indexing \citep{liu2024retrievalattentionacceleratinglongcontextllm, wu2024tokenselectefficientlongcontextinference} uses Approximate Nearest Neighbor (ANN) or the like to dynamically retrieve top-$k$ similar KV entries. Although ANN offers higher accuracy, it incurs significant overhead, up to $3\times$ and $5\times$ during prefilling and decoding \citep{liu2024retrievalattentionacceleratinglongcontextllm}. Meanwhile, pre-built indexes may degrade in scenarios like long outputs or multi-turn conversations.
\fi

\begin{figure}[t]
    \centering
    \includegraphics[width=\linewidth]{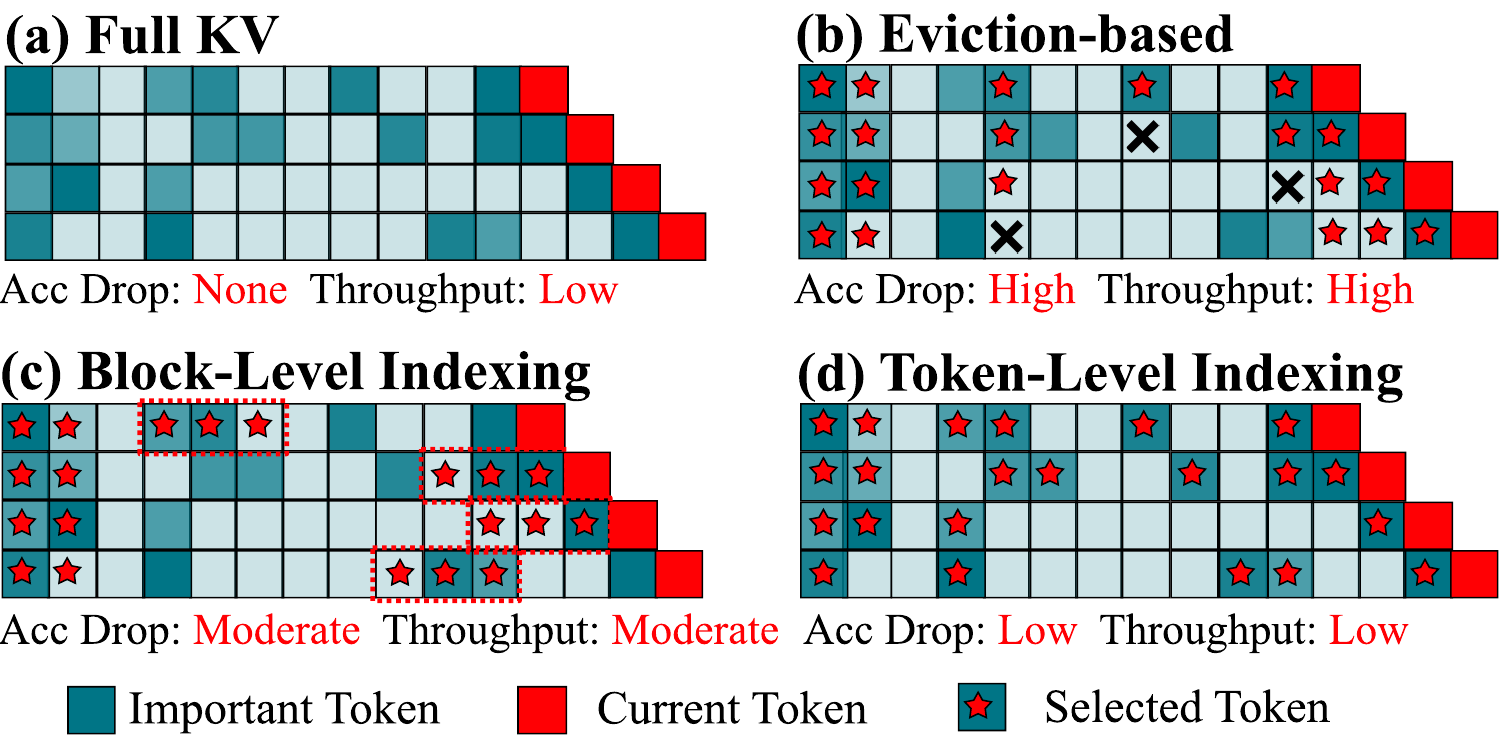}
    \caption{Illustration of KV cache compression methods and their comparison based on accuracy drop and throughput.}
    \label{fig:methods}
\end{figure}
To tackle this issue, recent methods such as KV cache eviction and sparse attention with block- or token-level indexing have been extensively studied (see Figure~\ref{fig:methods}(b)-(d)). 
However, these methods face several limitations to varying degrees: (i) reduced accuracy compared to using the full KV cache (Figure~\ref{fig:methods}(a)), (ii) significant latency overhead in decoding and prefilling. 
\textbf{\emph{KV cache eviction}} methods \citep{h2o, li2024snapkv} reduce memory usage by discarding KV pairs based on specific policies demonstrated in Figure~\ref{fig:methods}(b). 
While effective in reducing memory use, this method results in information loss, leading to accuracy decline, particularly in tasks requiring long outputs \citep{bai2024longwriter} or multi-turn conversations \citep{munkhdalai2024leavecontextbehindefficient}.
\textbf{\emph{Sparse attention methods}}~\cite{hooper2024squeezed,xiao2024infllm} reduce GPU memory footprint by offloading most KV caches to CPU and dynamically loading to the GPU the KV entries related to the current token for attention computation.
They can be further divided according to their indexing granularity:
\emph{Block-level indexing} \citep{sun2024shadowkv, tang2024quest} summarizes cached Keys into block-level vectors during prefilling and retrieves blocks\footnote{{Each Key block, which typically consists of 8 or 16 Key vectors~\cite{sun2024shadowkv}, is stored contiguously in memory.}} based on similarity during decoding shown in Figure~\ref{fig:methods}(c). While efficient, this can introduce irrelevant KV entries within blocks, reducing accuracy.
\emph{Token-level indexing} \citep{liu2024retrievalattentionacceleratinglongcontextllm, wu2024tokenselectefficientlongcontextinference} dynamically retrieves top-$k$ similar KV entries using methods like \emph{Approximate Nearest Neighbor} (ANN) in Figure~\ref{fig:methods}(d). Although this achieves higher accuracy, it imposes significant computational overhead, with latency increasing by up to $3\times$ during prefilling and $5\times$ during decoding \citep{liu2024retrievalattentionacceleratinglongcontextllm}. Pre-built ANN indexes also struggle in scenarios with long outputs or multi-turn conversations, further impairing performance.

In summary, existing KV compression methods continue to face trade-offs in accuracy, efficiency, and scalability for long-context scenarios. To address these limitations, an effective LLM serving system should: \textit{(1) maintain accuracy close to that of the full KV cache, (2) minimize prefilling and decoding latency, and (3) reduce GPU memory usage}. Achieving these goals is crucial, as emphasized in prior research \citep{sun2024shadowkv,magicpig}.


Fortunately, we observe that, after applying \emph{Rotary Position Embedding} (RoPE \citep{su2023roformerenhancedtransformerrotary}) in attention computations, query vectors adjacent in positions tend to exhibit high similarity and frequently share most of their top-$k$ KV entries (see Section~\ref{sec:obs}). 
Given this, we propose \textbf{\emph{Centroid-then-Token KV Retrieval}} (\model{}), a novel method that constructs a lightweight query-centroid Inverted File (\QCIVF{}) during prefilling for efficient LLM decoding. 
\QCIVF{} is computationally efficient to build and search, supporting massive centroids for higher accuracy \citep{douze2024faisslibrary}. 
Furthermore, inspired by \citet{he2024fastdecodehighthroughputgpuefficientllm}, we offload partial KV cache computation and retrieval to the CPU, allowing for larger batch sizes by utilizing DRAM. 
{To further maximize throughput, our optimization incorporates custom CUDA kernels and utilizes CUDA multi-streaming techniques.}
Our contributions are summarized as follows.
\begin{itemize}[leftmargin=*]
    \item \textbf{\emph{Efficient and Effective Query-Centric Retrieval Algorithm}} (Section \ref{sec:algo}):
    We propose \model{}, a method designed for efficiently retrieving top-$k$ Keys from the full Key cache based on a centroid-then-token indexing. 
    \model{} performs a two-stage indexing process: (1) \emph{Query-centroid Indexing} that groups query vectors into centroids and gathers corresponding Keys for a coarse-grained search; and (2) \emph{Fine-grained Token-Level Indexing} that refines the search by retrieving the concrete top-$k$ results with high accuracy. This hierarchical indexing ensures both computational efficiency and retrieval effectiveness.

    
    \item \textbf{\emph{System Designs for Scalability and Efficiency}} (Section \ref{sec:sys}): We optimize system performance by (1) offloading partial KV cache attention computation to the CPU, enabling larger batch sizes and longer contexts by leveraging DRAM and (2) accelerating the overall process via custom CUDA kernels and CUDA multi-streaming techniques, enabling overlap between GPU and CPU computations to maximize throughput.
    
    \item \textbf{\emph{Empirical Validation of Accuracy and Efficiency}} (Section \ref{sec:exp}): \model{} achieves superior accuracy across multiple well-recognized benchmarks, with less than 1\% accuracy degradation.
    More importantly, \model{} delivers 3$\times$ and 4$\times$ throughput speedups on LLMs like \texttt{Llama-3-8B} and \texttt{Yi-9B} across various GPU specifications with 96K context lengths.
\end{itemize}

\section{Observations and Insights} \label{sec:obs}

We present insights into long-context LLM behavior that form the foundation of \model{}'s design.

\begin{figure}
    \centering
    \includegraphics[width=0.95\linewidth]{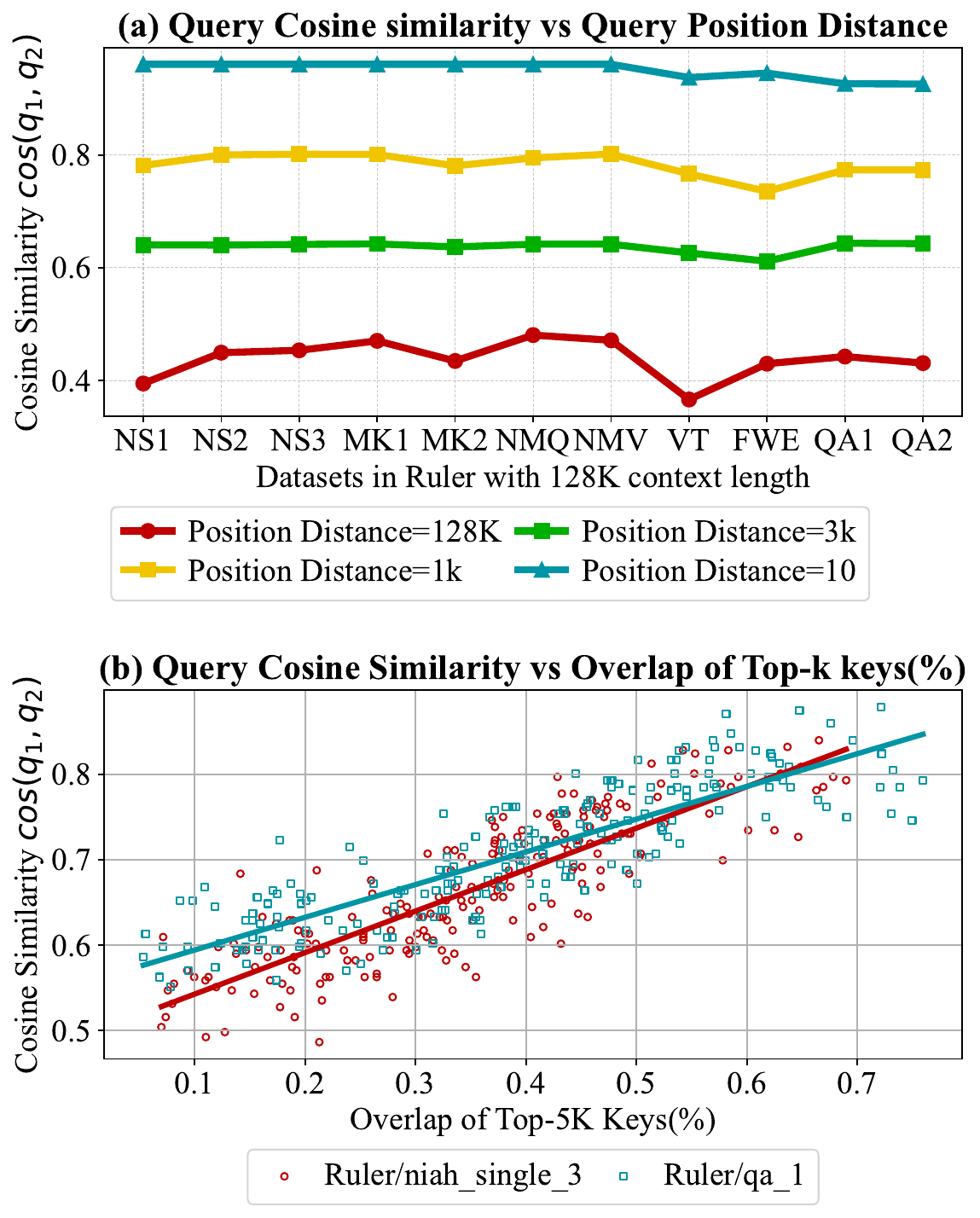}
    \caption{Analysis of query vector similarity: (a) Query distance proximity correlates positively with cosine similarity, with consistent trends across datasets. (b) The overlap of top-$k$ retrieved Keys is approximately positively correlated with query cosine similarity.}
    \label{fig:query_sim}
\end{figure}

\noindent\begin{observation}[{\bf \em Positionally Adjacent Queries are Highly Similar after RoPE}]\label{obs_1}
As shown in \autoref{fig:query_sim}(a), we tested the cosine similarity of query vectors at varying position distances across datasets in \textsf{Ruler} \citep{hsieh2024ruler}, a long-context benchmark with 13 datasets spanning retrieval, QA, multi-hop tracking, and aggregation tasks. 
The results reveal a striking pattern: positionally close query vectors exhibit high similarity.
For instance, even at a position distance within 1K, the cosine similarity consistently exceeds 0.8 in most datasets.
\end{observation}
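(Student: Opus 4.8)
The plan is to decompose the claimed similarity into a rotation-induced factor that I can bound analytically and a content-drift factor that must be measured empirically. First I would fix the RoPE convention: for a content query vector $q$ at position $m$, the embedded query is $\tilde{q}_m = R_m q$, where $R_m$ is block-diagonal with $2\times 2$ rotation blocks of angle $m\theta_i$, head dimension $d$, and $\theta_i = b^{-2i/d}$ for base $b$. Using orthogonality and the additive composition of rotations, the inner product of two embedded queries at positions $m$ and $n$ collapses to a relative-position form,
\begin{equation}
\langle \tilde{q}_m, \tilde{q}_n \rangle = \langle q_m, R_m^{\top} R_n q_n \rangle = \langle q_m, R_{n-m}\, q_n \rangle,
\end{equation}
so the entire positional dependence is carried by the relative distance $\delta = n - m$.

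The core analytic step isolates the idealized case $q_m = q_n = q$, for which a direct computation on each $2\times 2$ block gives
\begin{equation}
\frac{\langle q, R_{\delta}\, q \rangle}{\|q\|^2} = \frac{\sum_i \|q^{(i)}\|^2 \cos(\delta\theta_i)}{\sum_i \|q^{(i)}\|^2},
\end{equation}
where $q^{(i)}$ is the $i$-th coordinate pair. This expresses the self-similarity under a positional shift as a convex combination of $\cos(\delta\theta_i)$, which I would bound below by exploiting that the frequencies $\theta_i$ decay geometrically from $1$ toward $b^{-1}$: for the many low-frequency blocks, $\delta\theta_i$ stays near zero at modest $\delta$, keeping the weighted cosine close to one. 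I would then quantify how large $\delta$ may grow before the high-frequency blocks (those with $\theta_i$ near $1$) begin to dephase, matching the empirically observed threshold near a position distance of 1K.

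The remaining, and genuinely hard, part is the content factor: the argument above assumes $q_m = q_n$, whereas the real claim concerns the distinct content vectors the network produces at adjacent positions. My plan here is to write $q_n = q_m + \Delta$ and expand the cosine to first order in $\|\Delta\|/\|q_m\|$, using orthogonality of $R_\delta$ to split $\langle \tilde{q}_m, \tilde{q}_n \rangle$ into the analytic self-similarity term plus a perturbation of size $\|\Delta\|$. This reduces the gap to a single quantity --- the rate at which the pre-RoPE content drifts with position. Since this drift is a property of the trained model rather than of RoPE itself, I expect no closed-form bound to be available, and I would instead establish it empirically by measuring $\|\Delta\|/\|q_m\|$ across the \textsf{Ruler} datasets, as summarized in \autoref{fig:query_sim}(a). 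Combining the analytic rotation bound with this measured content-drift term then yields the stated similarity behavior; the main obstacle is precisely that the content term resists purely analytic treatment and must be controlled by measurement.
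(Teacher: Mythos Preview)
The paper does not prove this observation analytically at all; it is presented purely as an empirical finding, with the evidence being direct cosine-similarity measurements across the thirteen \textsf{Ruler} datasets plotted against position distance (the figure referenced in the statement). There is no decomposition into rotation and content terms, no bound on the RoPE self-similarity, and no perturbation argument --- the paper simply computes the post-RoPE cosines and reads off the pattern. (The appendix labeled ``Formal Proof of Obs.~1'' in fact proves the overlap statement of Obs.~2, not this one.)

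Your proposal is therefore substantially more ambitious than what the paper does: you attempt to explain the phenomenon by splitting the post-RoPE cosine into an analytic rotation factor and an empirically controlled content drift. The decomposition $\langle R_m q_m, R_n q_n\rangle = \langle q_m, R_{n-m} q_n\rangle$ and the weighted-cosine formula for the $q_m=q_n$ case are both correct. Two caveats are worth noting. First, your lower bound on the rotation-only term tacitly assumes the query energy is not concentrated in the high-frequency blocks; if it were, dephasing would be fast regardless of $\delta$, so this is itself an empirical property of the trained network rather than a free analytic fact. Second, \autoref{fig:query_sim}(a) reports the post-RoPE cosine directly, not the pre-RoPE drift $\|\Delta\|/\|q_m\|$ that your perturbation step needs, so you would have to run an additional measurement the paper does not perform. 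In short, both routes ultimately rest on measurement; the paper takes the direct one, while yours supplies a mechanistic story at the cost of needing finer-grained empirical inputs.
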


\noindent\begin{observation}[{\bf \em Similar Vectors Retrieve Overlapping Top-$k$ Keys}]\label{obs_2}
Formally, for two vectors $Q, Q' \in \mathbb{R}^{d}$ and a search space $\{K_i\}_{i=1}^N$, if their cosine similarity satisfies $\operatorname{cos}(Q, Q') > \varepsilon$, their top-$k$ retrieval sets $S$ and $S'$ exhibit significant overlap, i.e., ${|S \cap S'|}/{|S|} > p$\footnote{A formal proof has been provided in Appendix~\ref{sec:appendix_lemma}. \citet{wu2024tokenselectefficientlongcontextinference} reach a similar conclusion, which is a specialized case of the lemma here, using a different proof methodology.}.
Experimentally, we randomly sampled query vector pairs from different positions in two representative \textsf{RULER} datasets. For each pair, we compute their cosine similarity and the overlap ratio of the top-5K Keys recalled by the $QK^T$ score. These results are then used to create a scatter plot, with a fitted trend line to indicate the relationship.
As shown in \autoref{fig:query_sim}(b), there is a clear positive correlation between query similarity and the overlap of their retrieved Keys. This indicates that queries with similar embeddings tend to access highly similar Key sets, reducing the need to search the entire Key space for each query.
\end{observation}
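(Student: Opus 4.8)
The plan is to prove a quantitative version of the claim by viewing the ranking induced by $Q'$ as a uniformly bounded additive perturbation of the ranking induced by $Q$, and then showing that only those Keys whose $Q$-score lies in a thin band just above the top-$k$ threshold can possibly be displaced. First I would normalize: since the top-$k$ set under an inner-product score is invariant under positive rescaling of the query, I assume $\|Q\|=\|Q'\|=1$, so that $c := \cos(Q,Q') = \langle Q, Q'\rangle > \varepsilon$ (taking $\varepsilon\in(0,1)$). Decomposing $Q' = cQ + \sqrt{1-c^2}\,u$ with $u$ a unit vector orthogonal to $Q$ and writing $s_i := \langle Q, K_i\rangle$, the $Q'$-ranking coincides (after dividing the $Q'$-scores by $c>0$) with the ranking induced by $\tilde s_i := s_i + \delta_i$, where $\delta_i = \tfrac{\sqrt{1-c^2}}{c}\langle u, K_i\rangle$. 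Cauchy--Schwarz gives $|\delta_i| \le \Delta := \tfrac{\sqrt{1-c^2}}{c}\max_j\|K_j\|$, and since $t\mapsto \sqrt{1-t^2}/t$ is decreasing on $(0,1]$, the hypothesis $c>\varepsilon$ yields $\Delta < \Delta_\varepsilon := \tfrac{\sqrt{1-\varepsilon^2}}{\varepsilon}\max_j\|K_j\|$. Thus the two scorings differ only through a perturbation whose magnitude shrinks to $0$ as $\varepsilon\to 1$.

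Next I would run a margin/threshold argument. Let $\tau$ be the $k$-th largest value among the $s_i$, so that $S=\{i: s_i\ge\tau\}$ after fixing a tie-breaking rule. Define the \emph{safe} set $A := \{i: s_i \ge \tau + 2\Delta\} \subseteq S$. I claim $A\subseteq S'$: for $i\in A$ we have $\tilde s_i \ge s_i-\Delta \ge \tau+\Delta$, whereas any $j\notin S$ satisfies $\tilde s_j \le s_j+\Delta \le \tau+\Delta \le \tilde s_i$, so no Key outside $S$ can outrank $i$ under $Q'$; consequently at most the $k-1$ other members of $S$ can precede $i$, and $i$ remains in the top-$k$ of $Q'$. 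Hence $A\subseteq S\cap S'$, and since $|S|=k$,
\[
\frac{|S\cap S'|}{|S|} \;\ge\; \frac{|A|}{k} \;=\; 1 - \frac{\bigl|\{i\in S: \tau \le s_i < \tau + 2\Delta\}\bigr|}{k}.
\]
This already establishes the qualitative content of the observation: as $\varepsilon\to 1$ we have $\Delta\to 0$, the band $[\tau,\tau+2\Delta)$ collapses, and the overlap ratio tends to $1$.

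The hard part will be controlling the count of $Q$-scores falling in the width-$2\Delta$ band immediately above the threshold. A purely worst-case input can pile up to $k$ near-tied scores at the boundary, which would allow the entire top-$k$ set to be displaced; so some regularity hypothesis is unavoidable and is precisely what the informal statement suppresses. I would close the proof by assuming bounded score density near the threshold --- concretely, that at most $\rho$ Keys lie in any score interval of length $2\Delta_\varepsilon$ --- which upgrades the display to the clean bound $p \ge 1-\rho/k$, monotonically improving as $\varepsilon\to 1$. Alternatively, under a mild stochastic model for the Keys (e.g.\ $\langle u,K_i\rangle$ admitting a bounded density, independent of $s_i$), the band count concentrates and the same conclusion holds in expectation, matching the positive correlation exhibited in Figure~\ref{fig:query_sim}(b). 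Stating this density/regularity assumption in a way that is both faithful to the empirical setting and light enough to keep the bound interpretable is the step I expect to require the most care.
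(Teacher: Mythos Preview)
Your argument is correct, but it proceeds along a different route from the paper's. The paper factors the proof through two auxiliary lemmas: an inversion-counting lemma (if two rankings differ by $t$ inversions, their top-$m$ sets differ in at most $\lfloor\sqrt{t}\rfloor$ elements) and a pairwise order-preservation lemma (if $Q_1\!\cdot\!K_1 - Q_1\!\cdot\!K_2 = \delta > 4\sin(\theta/2)$ with $\theta$ the angle between $Q_1$ and $Q_2$, then $Q_2$ ranks $K_1$ above $K_2$ as well). The paper then chooses $\theta$ so that the number of inversions stays below $K^2(1-p)^2$, which amounts to requiring that the $(K^2(1-p)^2)$-th smallest pairwise score gap exceed $4\sin(\theta/2)$. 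Your approach bypasses the inversion machinery entirely: you bound the additive perturbation uniformly by $\Delta$, then run a direct margin argument showing that any key with score at least $\tau+2\Delta$ survives, so the displacement is confined to the band $[\tau,\tau+2\Delta)$. Both proofs ultimately hinge on the same unavoidable regularity input---the paper phrases it as control on small pairwise gaps $d_{(t)}$, you phrase it as bounded score density near the threshold---and you are right to flag this as the step the informal statement hides. Your version is more elementary (no combinatorial detour through inversions) and gives a bound that is easier to read off from the empirical score histogram; the paper's inversion route is more indirect but packages the dependence on the gap distribution into a single order statistic.
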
 

\noindent\begin{insight}
The inefficiency of current ANN-based methods \citep{liu2024retrievalattentionacceleratinglongcontextllm} in index construction and vector retrieval stems from their need to model the full query-Key mapping across long contexts, much of which is redundant for decoding. 
Obs.~\ref{obs_2} reveals that accurately retrieving the top-$k$ Keys for a query can be achieved by limiting the search to Keys associated with queries similar to the current query.
Obs.~\ref{obs_1} provides a lightweight mechanism to identify similar queries efficiently, as positional proximity correlates strongly with query similarity.
\end{insight}

\begin{figure}[ht]
    \centering
    \includegraphics[width=\linewidth]{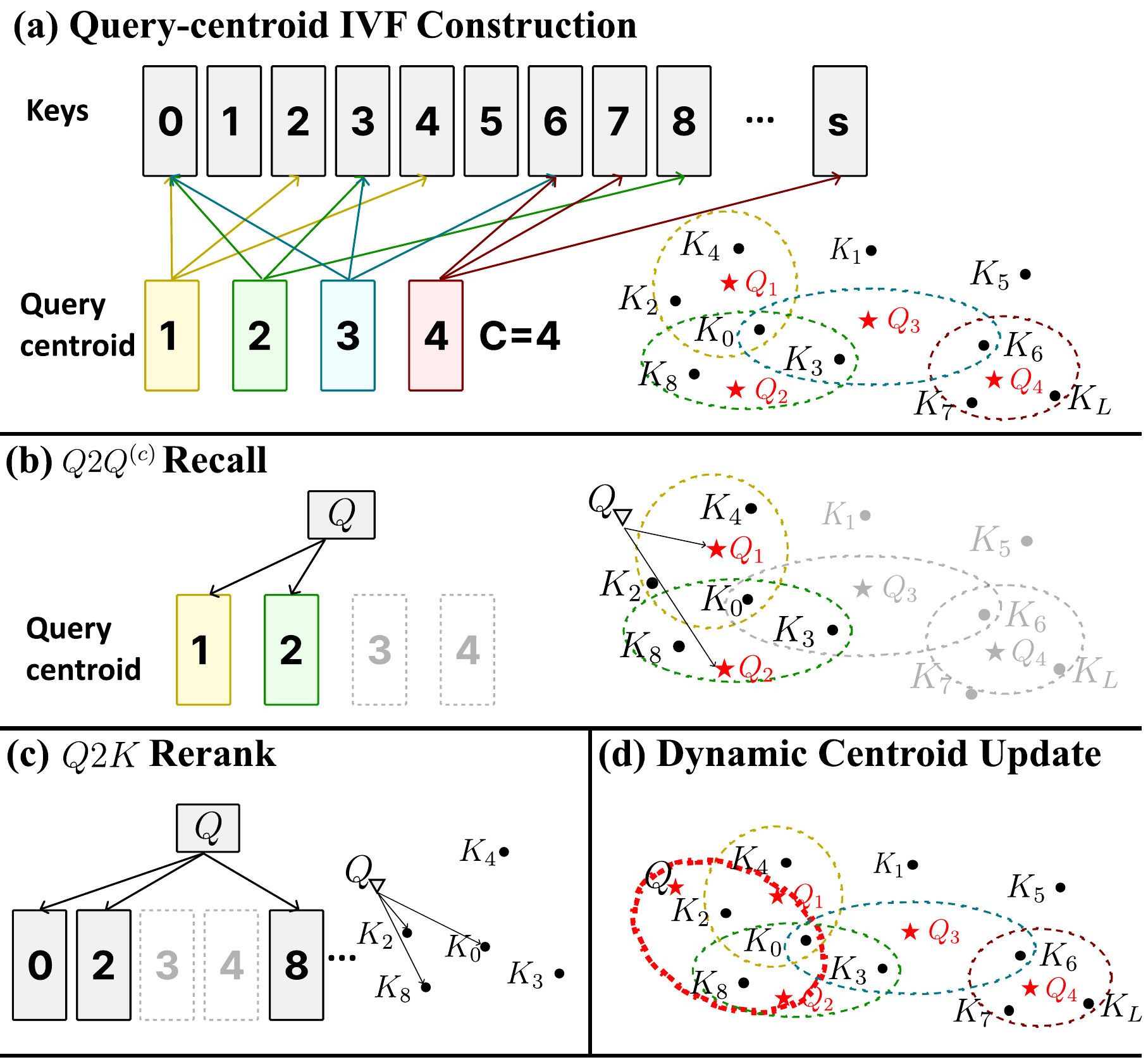}
    \caption{The main components of \model{}, the pseudocode of (a) and (b)(c)(d) are separately provided in \autoref{alg:DIDX-prefill} and \autoref{alg:DIDX-decoding}.}
    \label{fig:detail}
\end{figure}

\section{Centroid-then-Token KV Retrieval}
\label{sec:method}

Section \ref{sec:obs} highlights the potential of modeling \emph{partial} query-Key mappings to accelerate ANN search. Building on this, Section~\ref{sec:algo} outlines the query-centroid Inverted File (\QCIVF{}) construction and the top-$k$ retrieval algorithm, while Section~\ref{sec:sys} introduces a CPU-GPU co-design with CUDA optimizations for efficient LLM decoding.

\subsection{\QCIVF{} and KV Retrieval Algorithm} \label{sec:algo}

\subsubsection{\textbf{\QCIVF{} Construction}}
During the \textbf{\emph{prefilling}} procedure as shown in \autoref{fig:detail}(a) and \autoref{alg:DIDX-prefill}, we extract the last $C$ query vectors from the long context as the query centroids (\autoref{a1:l1}), leveraging Obs.~\ref{obs_1} showing that positionally adjacent queries have similar top-$k$ Keys.
For each centroid, we compute attention scores on the full Key cache to find the top-$\rho$ most similar Keys (\autoref{a1:l2}--\autoref{a1:l3}) for each KV head instead of query head\footnote{Most LLMs support \emph{Grouped-Query Attention} (GQA) that reduces computation and memory by grouping $h$ query heads into $g$ groups that share Key-Value heads~\citep{gpa} (called KV head in this paper). \model{} leverages this by modeling $g$ for \QCIVF{} (\autoref{a1:l2}--\autoref{a2:l3} in \autoref{alg:DIDX-prefill}), further reducing its storage overhead. Formally, given input $X \in \mathbb{R}^{b\times h \times s \times d}$, the output $Y \in \mathbb{R}^{b\times g\times s\times d}$ is computed as: $\mathbf{Y}[:, i, :, :] = \max\left( \mathbf{X}[:,, h_i : h_{i+1},, :, :] \right), \quad i = 0, 1, \dots, g{-}1,\quad h_i = i \cdot \frac{h}{g}$}, storing the results as the query-centroid Inverted File index \QCIVF{} (\autoref{a1:l4}).
Using this index, most of the KV cache\footnote{Following prior works, we define the batch size, sequence length, and head dimension of Key and Value as $b$, $s$, and $d$.} can be offloaded to the CPU, retaining only the initial and local KV cache on the GPU, as in \textsc{StreamingLLM} \citep{xiao2023streamingllm} (\autoref{a1:l5}--\autoref{a1:l6}).
For a 128K-length text, this approach reduces GPU storage to just 5\% of the original KV cache size. Meanwhile, this approach offers a significant speed advantage over ANN-based indexing with the negligible memory overhead in \QCIVF{}\footnote{For $b, g, C, \rho=1, 4, 512, 2560$, the memory overhead of \QCIVF{} is $1 \times g \times C \times \rho \times 4 (sizeof(int32)) = 20MB$.}.


\begin{algorithm}[!htbp]
\fontsize{8.5}{11}\selectfont
\caption{\small{\QCIVF{} Construction at Prefilling}}
\label{alg:DIDX-prefill}
\KwIn{$K, V \in \mathbb{R}^{b \times g \times s \times d}$, $Q \in \mathbb{R}^{b \times h \times s \times d}$, number of query centroids $C$, number of recall per query $\rho$, reserved initial and local KV cache lengths $L^\text{init}$ and $L^\text{local}$, respectively.}
\BigComment{\bigcommentstyle{(a) Query-centroid IVF Construction}}
\Comment{\commentstyle{Select the last $C$ query vectors as the centroids}}
$Q^{(c)} \leftarrow Q[:, :, -C:, :]$ \label{a1:l1}\\
\Comment{\commentstyle{Attention score between query centroids and Keys}}
$\mathcal{A}_q\in \mathbb{R}^{b \times (h/g \times g) \times C \times s} \leftarrow \operatorname{Softmax}(Q^{(c)}\cdot K)$\label{a1:l2}\\ 
\Comment{\commentstyle{Compute maximum score within each GQA group}}
$\mathcal{A}_{kv}\in \mathbb{R}^{b \times g \times C \times s} \leftarrow \operatorname{max_{kv-group}}(\mathcal{A}_q)$ \label{a1:l3}\\
\Comment{\commentstyle{Select top-$\rho$ Key vectors for each query centroid's each KV head}}
$\QCIVF{}\in \mathbb{R}^{b \times g \times C \times \rho} \leftarrow \mathcal{A}_{kv}.\operatorname{top}(\rho)$ \label{a1:l4}\\
\Comment{\commentstyle{Offload initial and local portions of KV cache to CPU}}
$K^\text{CPU}, V^\text{CPU} \leftarrow \operatorname{TruncLocalInit}(K, V, L^\text{init}, L^\text{local})$ \label{a1:l5}\\
$K^\text{GPU}, V^\text{GPU} \leftarrow \operatorname{GetLocalInit}(K, V, L^\text{init}, L^\text{local})$ \label{a1:l6}
\end{algorithm}

\subsubsection{\textbf{KV Retrieval}} 
The retrieval incurred at \textbf{\emph{decoding}} has two stages (see \autoref{alg:DIDX-decoding}): \emph{Recall} and \emph{Rerank}. 
In the Recall stage (\autoref{fig:detail}(b)), we compute the cosine similarity between the query $Q$ and all query centroids in $Q^{(c)}$ (\autoref{a2:l1}--\autoref{a2:l2}), selecting the top-$C'$ centroids per KV head (\autoref{a2:l3}). 
The corresponding Key indices are deduplicated and gathered to produce recall Keys $K^\text{Recall}$ (\autoref{a2:l4}).
In the Rerank stage (\autoref{fig:detail}(c)), attention scores are computed for the recalled Keys, and the top-$\rho'$ Key IDs with the highest scores are selected as $I^\text{Rerank}$ (\autoref{a2:l5}--\autoref{a2:l6}). This avoids full attention computation over all recalled Keys. 

Attention computation involves three parts: QK, Softmax and WV. 
For Recall and Rerank, with Key lengths $L^\text{Recall} = \rho \cdot C' \cdot \alpha$~\footnote{Here, $\alpha$ refers the deduplication coefficient and typically equals 0.5 or less in practice. For detailed results, see Appendix \ref{sec:appendix_alpha} for $\alpha$ ranges under different $\rho$, $C$ and $C'$.} and $L^\text{Rerank}=\rho'$, full attention on recalled KV cache has a complexity of $\mathcal{O}(L^\text{Recall}\cdot1\cdot d)+\mathcal{O}(L^\text{Recall}\cdot1)+\mathcal{O}(L^\text{Recall}\cdot1\cdot d) = \mathcal{O}(2\cdot L^\text{Recall}\cdot d)$. 
In contrast, \model{} reduces this to $\mathcal{O}(L^\text{Recall} \cdot d)$ for rerank QK computation and $\mathcal{O}(2 \cdot L^\text{Rerank} \cdot d)$ for final attention, achieving an acceleration factor of $\frac{L^\text{Recall} + 2 \cdot L^\text{Rerank}}{2 \cdot L^\text{Recall}}$.
For example, for $L^\text{Recall}=10\text{K}$ and $L^\text{Rerank}=1\text{K}$ typically, rerank reduces end-to-end token generation time by approximately 40\%.

\begin{algorithm}[htb]
\fontsize{8.5}{11}\selectfont
\caption{\small{KV Retrieval at Decoding}}
\label{alg:DIDX-decoding}
\KwIn{$\QCIVF{} \in \mathbb{R}^{b \times g \times C \times \rho}$, $Q \in \mathbb{R}^{b \times h \times 1 \times d}$, number of retrieved centroids $C'$, number of retrieved Keys per query (\emph{sparsity budget}) $\rho'$.}
\BigComment{\bigcommentstyle{(b) $Q2Q^{(c)}$ Recall Procedure}}
\Comment{\commentstyle{Compute cosine similarity to each query centroid}}
$\mathit{Cos}_q\in \mathbb{R}^{b \times (h/g \times g) \times C \times 1} \leftarrow \operatorname{cos}(Q^{(c)}, Q)$  \label{a2:l1} \\
$\mathit{Cos}_{kv} \in \mathbb{R}^{b \times g \times C} \leftarrow \operatorname{max_\text{kv-group}}(\mathit{Cos}_q)$ \label{a2:l2}\\
\Comment{\commentstyle{Select top-$C'$ query centroid IDs for each KV head}}
$I^Q \in \mathbb{R}^{b \times g \times C'} \leftarrow \mathit{Cos}_{kv}.\operatorname{top}(C')$ \label{a2:l3}\\
\Comment{\commentstyle{Gather and dedup Keys of chosen query centroid}}
$K^\text{Recall} \leftarrow \operatorname{Gather}(K^\text{CPU}, \operatorname{DedupIdx}(\QCIVF{}, I^Q))$\label{a2:l4}\\
\BigComment{\bigcommentstyle{(c) $Q2K$ Rerank Procedure}}
\Comment{\commentstyle{Use attention score to rerank the Keys to get top-$\rho'$ Key IDs for each KV head}}
$\mathcal{A}'_{kv} \leftarrow \operatorname{max}_\text{kv-group}(\operatorname{Softmax}(Q \cdot K^\text{Recall}))$ \label{a2:l5} \\
$I^\text{Rerank}\in \mathbb{R}^{b \times g \times \rho'} \leftarrow \mathcal{A}'_{kv}.\operatorname{top}(\rho')$ \label{a2:l6}\\
$K^\text{sparse} \leftarrow \operatorname{Gather}(K^\text{CPU}, I^\text{Rerank})$ \label{a2:l7}\\
$V^\text{sparse} \leftarrow \operatorname{Gather}(V^\text{CPU}, I^\text{Rerank})$ \label{a2:l8}\\
\BigComment{\bigcommentstyle{Attention Computation}}
$O^\text{CPU} \leftarrow \operatorname{Attn}(Q, K^\text{sparse}, V^\text{sparse})$ \label{a2:l9}\\ 
$O^\text{GPU} \leftarrow \operatorname{Attn}(Q, K^\text{GPU}, V^\text{GPU})$ \label{a2:l10}\\
$O \leftarrow \operatorname{Merge}(O^\text{CPU},O^\text{GPU})$ \label{a2:l11} \\
\BigComment{\bigcommentstyle{(d) Dynamic Centroid Update}}
\Comment{\commentstyle{Asynchronous computed after (c) and could be fully overlapped with attention computation}}
$\QCIVF{}.\operatorname{FIFO}(\mathcal{A}'_{kv}.\operatorname{top}(\rho))$ \label{a2:l12}
\end{algorithm}

\subsubsection{\textbf{Dynamic Centroid Update}} 
In tasks involving multi-turn conversations or generating long outputs, the similarity between the current query and the queries in \QCIVF{} gradually decreases as the positional distance grows, leading to less accurate retrieval of $K^\text{Recall}$ (\autoref{a2:l4}).
To address this, \model{} introduces an asynchronous procedure (\autoref{a2:l12}), starting after $Q2K$ Rerank step, to dynamically update \QCIVF{}.
This is achieved by adding the most relevant query-Key mappings identified through Recall and Rerank in each iteration (\autoref{a2:l5}). 
Notably, the time required for centroid update is fully overlapped with the attention computation, ensuring no additional latency. \QCIVF{} is managed using a first-in-first-out (FIFO) queue, which removes the oldest query-Key mappings to accommodate new ones, ensuring relevance to the current query.

\if 0
\subsection{DIDX System Implementation(old} \label{sec:sys}
The DIDX system has implemented two main optimizations to accelerate the decoding procedure and enhance throughput of overall system.

\noindent \textbf{CPU\&GPU Co-Computation for Attention:} The memory size of the KV cache remains a bottleneck for long-context LLM decoding, particularly when GPU VRAM is limited. Fortunately, inspired by the \citet{he2024fastdecodehighthroughputgpuefficientllm}, we leverage the CPU's DRAM to provide sufficient memory by offloading most of the KV cache to the CPU. As shown in \autoref{alg:DIDX-decoding}, we compute attention separately: on the CPU for the dynamic indexing of the KV cache after recall and rerank, and on the GPU for the static part, including the initial and local KV cache. The results are then merged to obtain the final attention output. Moreover, by utilizing multi-core parallelism, Single Instruction Multiple Data(SIMD) technique on the CPU, and the over 10× sparsity of sparse attention provided by DIDX, the computation speed of attention on a single CPU node(64 cores) can approach the speed of GPU, enabling the feasibility of multi-stream overlap.

\noindent \textbf{Cuda optimization with Dedicate Kernel \& multi-stream overlap}: We implement the framework based on PyTorch \citep{paszke2019pytorchimperativestylehighperformance} and dedicated cuda kernels. We use FlashInfer \citep{ye2025flashinfer} for attention computation, use cutlass \citep{CUTLASS} to implement fast Softmax operation and self-implement efficient CUDA kernel for multi-set deduplicate. Meanwhile, we leverage CUDA multi-streams technique to overlap the attention computation in GPU and CPU. 
\fi

\subsection{System-wide Optimization} \label{sec:sys}

\begin{figure}
    \centering
    \includegraphics[width=\linewidth]{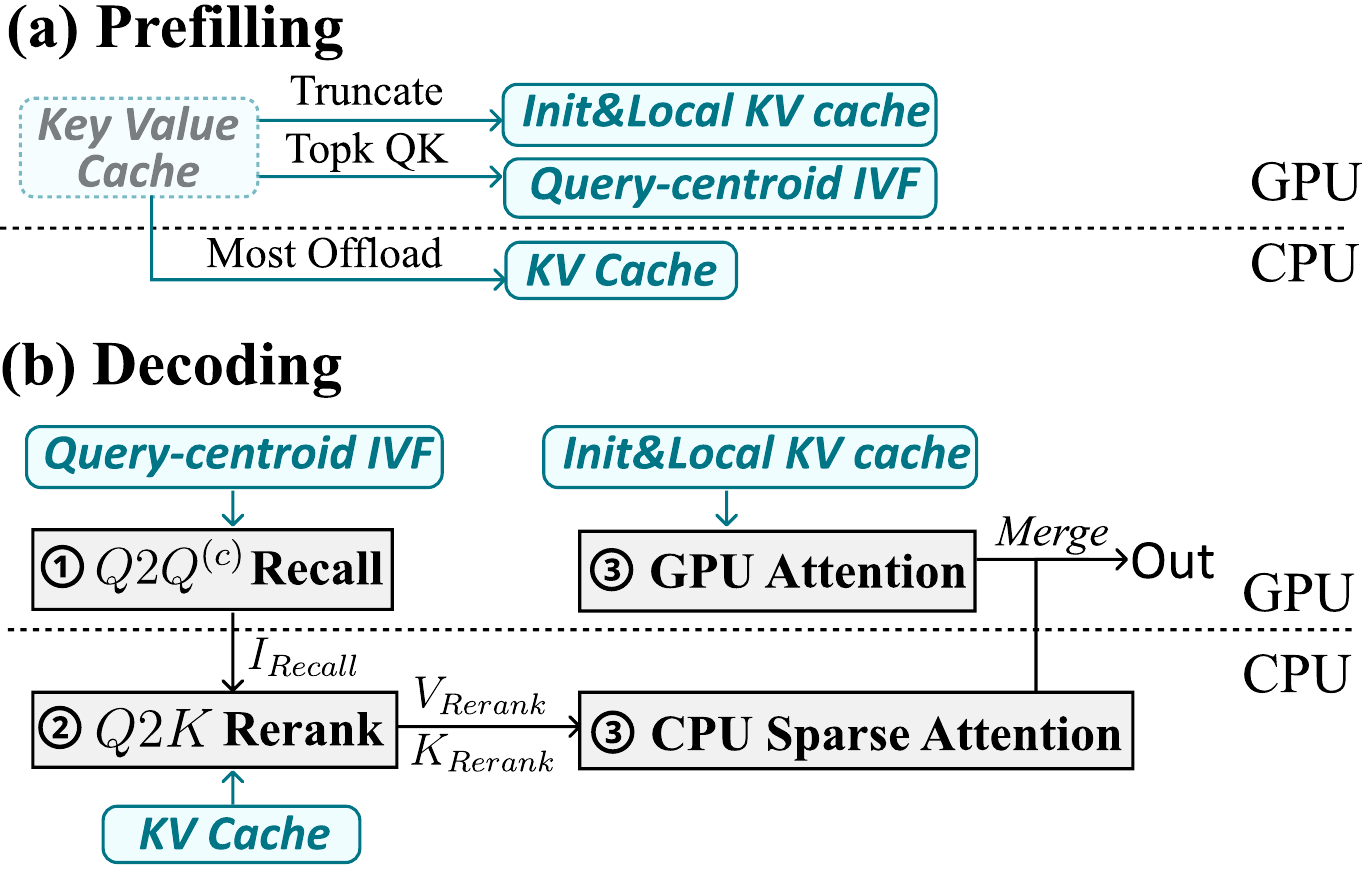}
    \caption{\model{} optimizes long-context decoding throughput by offloading most of the KV cache to the CPU during prefilling and enabling GPU-CPU co-execution for efficient attention computation.}
    \label{fig:overview}
\end{figure}

The KV cache size remains a critical bottleneck in long-context LLM decoding, especially when GPU VRAM is limited. However, advances in CPU FLOPs, bandwidth, and sparse attention mechanisms have made CPU-GPU co-execution for attention computation both feasible and efficient. 

As illustrated in \autoref{fig:overview}, \model{} introduces a novel system-wide optimization strategy.
During prefilling, the KV cache is split into two disjoint sets: a static GPU-resident cache containing initial and local tokens, and a larger portion offloaded to the CPU.
The GPU-resident cache adopts a static storage pattern \citep{xiao2023streamingllm}, while leaving room for integration with more complex patterns \citep{li2024snapkv} and \citep{minference}.

During decoding, \model{} performs $Q2Q^{(c)}$ recall (\autoref{fig:detail}(b)) on GPU and $Q2K$ Rerank (\autoref{fig:detail}(c)) on CPU to identify the relevant tokens residing on CPU.
Static and sparse portions of attention are computed on GPU and CPU, respectively, and results are merged to produce the final output.
To further accelerate decoding, \model{} includes a carefully optimized custom CUDA kernel ($\sim$200 LoC C++) designed specifically for efficient index deduplication and token retrieval (\autoref{a2:l4} in \autoref{alg:DIDX-decoding}) and utilizes CUDA multistreaming to overlap GPU and CPU attention computations, fully utilizing hardware resources to maximize throughput.

\section{Experiments} \label{sec:exp}

In this section, we showcase the effectiveness and efficiency of \model{}, specifically,
\begin{itemize}[leftmargin=*]
\item \model{} maintains nearly full KV accuracy with less than 1\% degradation across moderate to long context tasks (Section \ref{sec:acc}).
\item \model{} provides exceptional system performance, supporting up to 20$\times$ larger batch sizes and achieving up to 4$\times$ higher inference throughput (Section \ref{sec:efficient}).
\item Extensive ablation studies validate the effectiveness of \model{}'s components and the impact of critical design parameters (Section \ref{sec:abla}).
\end{itemize}

\begin{table}[tbh]
\centering
\caption{Accuracy comparison of different methods across varying context lengths on \textsf{Ruler}.}
\label{tb:ruler}
\begin{tblr}{
  cell{2}{1} = {r=8}{},
  cell{10}{1} = {r=8}{},
  vline{2-3,8} = {-}{},
  hline{1,2,3,8,10,11,16,18} = {-}{},
  colsep=4pt, rowsep=0.5pt,
  row{2,10} = {bg=mygray},
  row{8,9,16,17} = {bg=lightblue},
  cell{2,10}{1} = {bg=white},
  column{2-8} = {c}, 
}
& \textbf{Context Len.} & \textbf{8K} & \textbf{16K} & \textbf{32K} & \textbf{64K} & \textbf{128K} & \textbf{AVG.} \\
\rotatebox{90}{\texttt{Llama-3-8B-262K}} & \textsc{FullKV}                 & 90.97       & 90.10        & 86.17        & 83.06        & 79.65         & 85.99        \\
                    & \textsc{Snap}                       &  21.17           &   16.11           &   5.90           &    2.58          &     1.00          &     -76.64        \\
                    & \textsc{Quest}                        &  88.81           &   86.01           & 81.16             &    75.63          &      70.48         &    -5.58           \\
                    & \textsc{MagicPIG}                &  85.04           &   85.00          &    80.34          &   75.27           &   67.51            &   -7.36         \\
                    & \textsc{ShadowKV}                     & 90.06            &  88.81            & 85.74              &   80.40           &   74.39            &    -2.11           \\
                    & \textsc{Flat}                       &  90.10           &   89.89           & 86.00             &   83.37           &   77.34            &    -0.65           \\
                    & {\model{}$_{512}$}                &  89.90           &   89.65           &    86.42          &   82.71          &   74.82            &   -1.29           \\
                    & {\model{}$_{1024}$}                &  90.14           &   89.93           &    86.20          &   83.38           &   76.60            &   \textbf{-0.74}           \\
\rotatebox{90}{\texttt{Yi-9B-200K}}      & \textsc{FullKV}                      &   89.60          &     81.94         &    71.5          &     66.09         &   60.91            &   74.01           \\
                    & \textsc{Snap}                       &     7.77        &  9.03            &    7.10          &   7.05           &   2.83            &     -67.25          \\
                    & \textsc{Quest}                        &   84.85          &   75.77           &    63.23          &     58.17         &    52.22           &    -7.16           \\
                    & \textsc{MagicPIG}                     & 86.53            & 78.66             &   67.56            &   62.89          &   58.54            &   -3.17      \\
                    & \textsc{ShadowKV}                     &   86.96          &  79.25            &    68.19          &      65.12        &    57.69           &     -2.57          \\
                    & \textsc{Flat}                       &   90.30          &    81.72          &   71.67           &    65.02          &    58.82           &      -0.51         \\
                    & {\model{}$_{512}$}                &  90.16           &   81.45           &    72.57          &   64.09           &   57.03            &   -0.92           \\
                    & {\model{}$_{1024}$}                & 90.16            &   81.45           &    72.50         &  64.54            &    58.55           &    \textbf{-0.56}     \\  
\end{tblr}
\end{table}


\subsection{Accuracy Evaluation}\label{sec:acc}

\subsubsection{\textbf{Setup}} 
We employ two widely adopted long-context LLMs: \texttt{Llama-3-8B-262K} \citep{gradientlongcontextllama3} and \texttt{Yi-9B-200K} \citep{ai2024yi} to ensure a fair and direct comparison with previous works. 
\model{} is configured with $C = \min(2048, \frac{dataset\ length}{16})$, $C' = 4$, $\rho=2.5*\rho'$ and $\rho' = \{ 1024, 512\}$ to balance accuracy and efficiency.
Our evaluation spans three challenging long-context benchmarks: 
(1) \textsf{RULER} \citep{hsieh2024ruler}, a benchmark allowing users to customize the dataset length to generate 13 tasks across four categories (retrieval, multi-hop tracing, aggregation, and QA). We evaluate performance on context lengths ranging from 8K to 128K.
(2) \textsf{LongBench} \citep{bai2024longbench}, a benchmark with 21 datasets in both English and Chinese. We focus on 13 English datasets across five task categories: single-doc QA (S-Doc), multi-doc QA (M-Doc), summarization (Summ), few-shot learning (Few-Shot), and code completion (Code). 
(3) \textsf{Needle-in-a-haystack} \citep{needle}, a benchmark designed to test models' ability to retrieve critical information (the `needle') from lengthy documents (the `haystack').

\subsubsection{\textbf{Baseline}} 
We include five training-free baselines (No.~1--5) and one exact KNN retrieval method (No.~6): 
(1) \textsc{FullKV} based attentions without cache manipulation; 
(2) \textsc{SnapKV} \citep{li2024snapkv}, an eviction-based method that evaluates attention scores to discard KV cache entries;
(3) \textsc{Quest} \citep{tang2024quest}, a block-level sparse attention method that estimates block importance using query vectors while tracking minimal and maximal Key-Value entries within each block;
(4) \textsc{ShadowKV} \citep{sun2024shadowkv}: a block-level sparse attention method that uses mean pooling of Key cache blocks as landmarks to compute block scores;
(5) \textsc{MagicPIG} \citep{magicpig}, a token-level sparse attention method using \emph{Locality-Sensitive Hashing} (LSH) to efficiently locate crucial tokens. The LSH in \textsc{MagicPIG} is configured with 9 random hash functions and 120 hash tables\footnote{Since \textsc{MagicPIG} adopts an LSH-based sampling method, it enforces a fixed sparsity ratio of 4\% relative to the context length, rather than a fixed sparsity budget. To ensure a fairer comparison, we also evaluated CTKVR under the same 4\% sparsity ratio on the Ruler benchmark with LLaMA-3-8B shown in Appendix \ref{sec:dyn_spar}.}; and
(6) \textsc{Flat}: an exact KNN method performing linear scans of all Key-Value vectors to identify Keys with the highest attention scores.
All methods are evaluated with a sparsity budget $\rho' = 1024$ (i.e., size of KV caches retrieved for sparse attention, as in \autoref{a2:l6}--\autoref{a2:l8} in \autoref{alg:DIDX-decoding}), whereas \model{} is tested with both $\rho'$ sizes of $512$ and $1024$.\footnote{We provide additional experiments comparing \model{} to RetrievalAttention and SqueezeAttention separately in Appendix \ref{sec:retrievalattn} and Appendix \ref{sec:squeezeattn}.}

\begin{table}[tb]
\centering
\caption{Accuracy comparison of different methods across multiple task categories on \textsf{LongBench}.}
\label{tb:longbench}
\begin{tblr}{
  cell{2}{1} = {r=8}{},
  cell{10}{1} = {r=8}{},
  vline{2-3,8} = {-}{},
  hline{1,2,3,8,10,11,16,18} = {-}{},
  colsep=2pt, rowsep=0.5pt,
  row{2,10} = {bg=mygray},
  row{8,9,16,17} = {bg=lightblue},
  cell{2,10}{1} = {bg=white},
  column{2-8} = {c}, 
}
& \textbf{Task} & \textbf{S-Doc} & \textbf{M-Doc} & \textbf{Summ} & \textbf{Few-Shot} & \textbf{Code} & \textbf{AVG.} \\
\rotatebox{90}{\texttt{Llama-3-8B-262K}} & \textsc{FullKV}                 & 29.13       & 21.82        & 29.33        & 79.04        & 49.50         & 41.76        \\
                    & \textsc{Snap}                       &  5.42           &   6.25           &   12.35           &    40.12          &     27.06          & -23.52        \\
                    & \textsc{Quest}                        &  30.76           &   19.97           & 27.04             &    78.84          &      50.26         & -0.39        \\
                    & \textsc{MagicPIG}                     &   18.17          &    9.59          &   26.79            &  78.08           &        48.82       &  -5.47       \\
                    & \textsc{ShadowKV}                     & 30.73            &  17.88            & 30.71              &   78.95           &   49.15            & -0.27        \\
                    & \textsc{Flat}                       &  29.94           &   24.01           & 28.92             &   78.35           &   50.26            & +0.54        \\
                    & {\model{}$_{512}$}                     &  29.55           &   21.25           & 29.32             &   78.12           &   50.41            & \textbf{-0.03}        \\
                    & {\model{}$_{1024}$}                & 29.68       & 21.20        & 28.98        & 78.12        & 50.29         & -0.11        \\
\rotatebox{90}{\texttt{Yi-9B-200K}}      & \textsc{FullKV}                      & 28.04       & 38.54        & 25.86        & 83.50        & 69.81         & 49.15        \\
                    & \textsc{Snap}                       & 8.83           &   10.13           &   11.29           &    52.38          &     25.16          & -27.59        \\
                    & \textsc{Quest}                        & 28.19           &   37.39           & 25.97             &    83.34          &      68.46         & -0.48        \\
                    & \textsc{MagicPIG}                     & 26.67            &   38.21           &  24.74             &   83.66          &   69.14            &   -0.66      \\
                    & \textsc{ShadowKV}                     & 29.90            &  30.32            & 27.71              &   83.61           &   69.31            & -0.98        \\
                    & \textsc{Flat}                       &  29.02           &   37.99           & 26.29             &   83.61           &   69.12            & +0.05        \\
                    & {\model{}$_{512}$}                     &  28.72           &   38.16           & 26.15             &   83.61           &   69.72            & \textbf{+0.12}        \\
                    & {\model{}$_{1024}$}                & 28.77       & 38.08        & 26.24        & 83.61        & 69.35         & +0.06        \\
\end{tblr}
\end{table}

\subsubsection{\textbf{Analysis on \textsf{RULER}}} 
As shown in \autoref{tb:ruler}, \model{} achieves accuracy comparable to \textsc{FullKV} attention (within $\pm 1\%$) across varying context lengths on \textsf{Ruler} and different LLMs, owing to its ability to accurately retrieve crucial tokens. 
In contrast, eviction-based methods (\textsc{SnapKV}) and block-level approaches (\textsc{ShadowKV} and \textsc{Quest}) struggle to maintain stable performance under limited sparsity budgets as context lengths grow. 
Methods capable of token-level retrieval like \textsc{Flat} and \model{} demonstrate superior robustness, with only about 2.5\% performance degradation even at context lengths of 128K.

\subsubsection{\textbf{Analysis on \textsf{LongBench}}} 
As shown in \autoref{tb:longbench}, \model{} achieves nearly the same accuracy as \textsc{FullKV} ($\pm 1\%$) and even surpasses it in certain tasks. 
Other eviction-based and block-indexing methods, however, experience varying levels of performance degradation. Additionally, \model{} delivers more consistent performance across categories, with a maximum performance drop of only 1\%.

\subsubsection{\textbf{Analysis on \textsf{Needle-in-a-Haystack}}} 
On \textsf{Needle-in-a-Haystack} (\autoref{fig:niah}), \model{} effectively retrieves information from various positions across context windows ranging from 16K to 200K tokens. Detailed comparisons involving other baselines are provided in Appendix~\ref{sec:appendix_needle}.

\begin{figure}[!htbp]
    \centering
    \includegraphics[width=\linewidth]{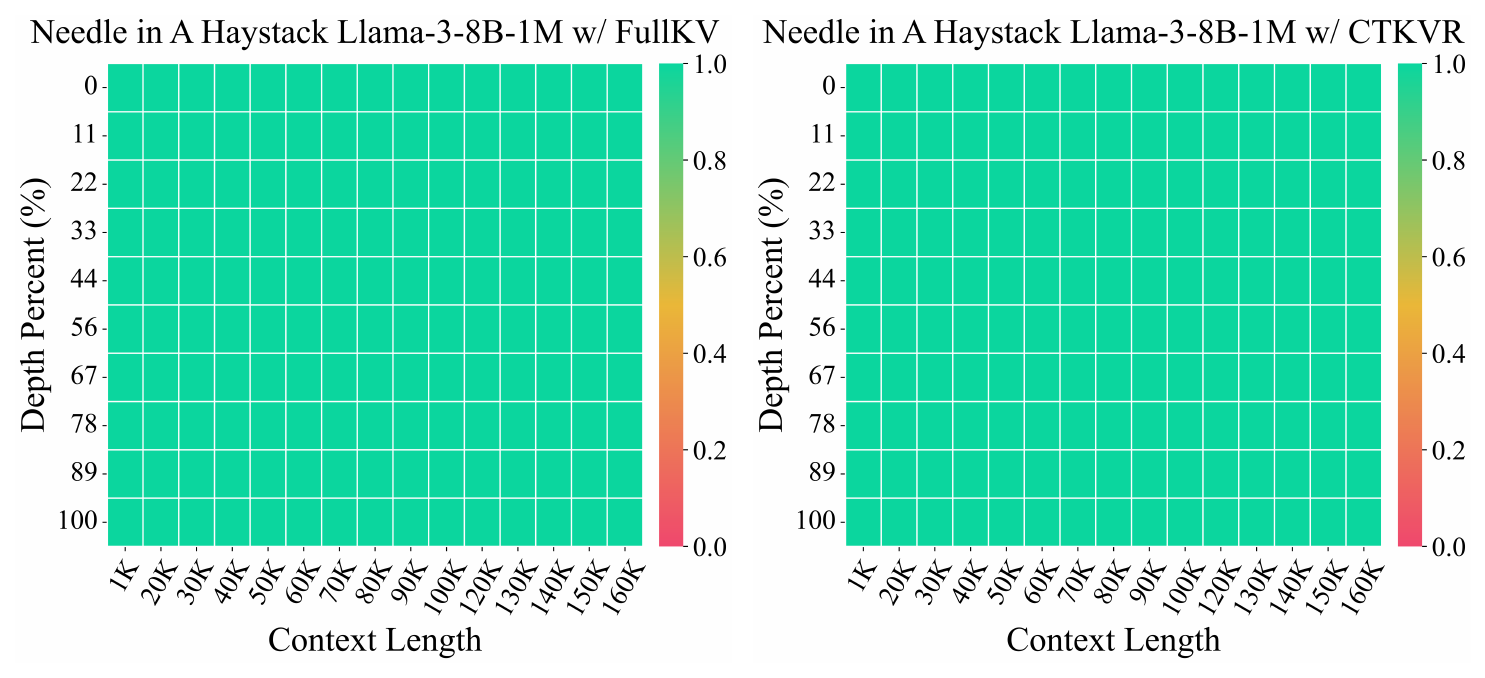}
    \caption{Performance comparison of \textsc{FullKV} and \model{} using heatmaps, following methodology originated from the \textsf{Needle-in-a-haystack} paper.}
    \label{fig:niah}
\end{figure}

\subsubsection{\textbf{Scaling up to Extremely Long-Context Inference}} 
We evaluate \model{} on the \textsf{Needle-in-a-haystack} dataset with extremely long contexts ranging from 200K to 1 million (1M) tokens, using \texttt{Llama-3-8B-1M} \citep{gradientlongcontextllama3}. As shown in \autoref{fig:longlong}, \model{} successfully retrieves all needles, demonstrating the robustness and effectiveness of our indexing methods in handling ultra-long contexts.

\begin{figure}[!htbp]
    \centering
    \includegraphics[width=0.6\linewidth]{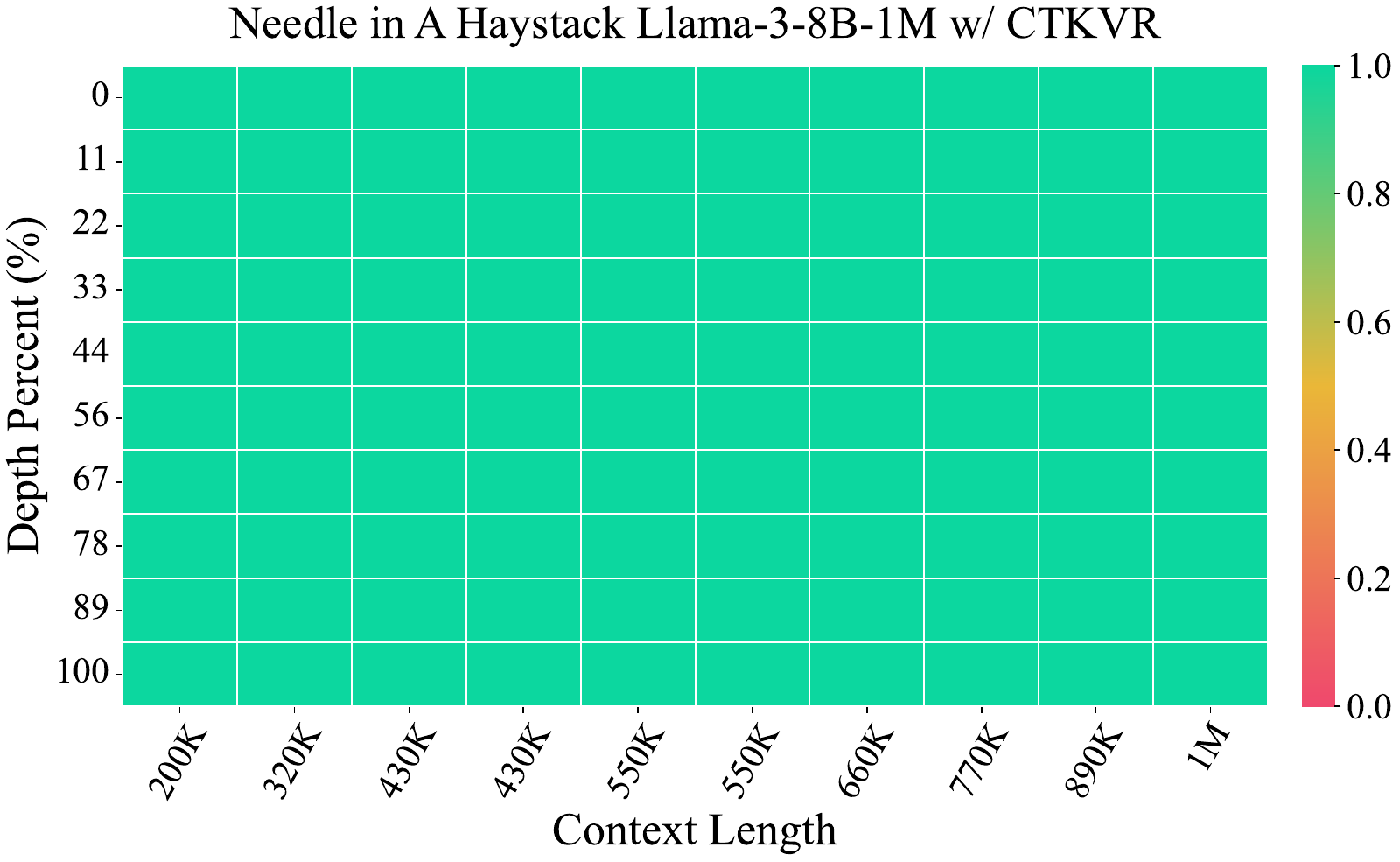}
    \caption{Performance of \model{} on \textsf{Needle-in-a-haystack} with context lengths ranging from 200K to 1M tokens, evaluated on \texttt{Llama-3-8B-1M}.}
    \label{fig:longlong}
\end{figure}

\subsubsection{\textbf{Integration with Efficient Prefilling Methods}}
We further evaluate the performance of \model{} when integrated with the state-of-the-art efficient prefilling method, \textsc{mInference} \citep{minference}. Following the experimental setup of \textsc{mInference}, we test both methods on the \textsf{RULER} dataset with context lengths ranging from 8K to 128K.

As shown in \autoref{tb:prefill}, the results demonstrate that \model{} is fully compatible with the prefilling acceleration techniques, exhibiting less than 1\% performance degradation across tested context lengths. Notably, \model{} even improves performance at certain lengths, such as 32K and 64K, further validating its adaptability and efficiency in long-context scenarios.

\begin{table}[!htbp]
\centering
\footnotesize
\begin{tblr}{
    colspec = {c c c c c c c},
    row{1} = {font=\bfseries}, 
    hline{1,2,4} = {-}{},
    vline{2,7} = {-}{},
    colsep=3.5pt, rowsep=1.5pt,
    row{2} = {bg=mygray},
    cell{2}{1} = {bg=white},
}
Prefilling+Decoding Method               & 8K    & 16K   & 32K   & 64K   & 128K  & AVG.    \\
\textsc{mInference}+\textsc{FullKV}      & 90.79 & 89.78 & 85.54 & 82.25 & 78.1  & 85.29 \\
\textsc{mInference}+\model{}$_{512}$ & 89.98 & 89.51 & \textbf{85.93} & \textbf{82.42} & 75.32 & -0.65 \\
\end{tblr}
\caption{Accuracy comparison of different decoding methods combined with efficient prefilling methods \textsc{mInference} on \texttt{Llama-3-8B-262K}.}
\label{tb:prefill}
\end{table}


\subsection{System Efficiency Evaluation}
\label{sec:efficient}

\subsubsection{\textbf{Setup}} 
We evaluate our system across six configurations, combining two LLM setups and three GPU settings. 
Two LLMs, \texttt{Yi} with 9B parameters and \texttt{Llama-3} with 8B parameters are tested, both on a 96K context.
GPUs with different memory capacities include A6000 (48GB), V100 (32GB), and A6000 (24GB)\footnote{The performance of the 24GB GPU is simulated by imposing a memory limit on the A6000 due to a hardware lack. Tests indicate that this closely approximates the performance of standard 24GB GPUs, such as the RTX 4090.}. 
\model{} uses an Intel Xeon w9-3495X CPU with 56 cores and 250GB DRAM.

\subsubsection{\textbf{Baseline}} 
Following prior works \citep{magicpig, sun2024shadowkv, tang2024quest}, we compare \model{} against \textsc{FullKV} that preserves all KV caches in GPU for full attentions. 
For further illustrate the system efficiency of \model{}, we include two high-throughput implementations:
(1) \textsc{Vllm} \citep{vllm}, a Full-KV alternative that accelerates inference by sharing KV caches across requests;
(2) \textsc{MagicPIG}, which utilizes token-level indexing to offload KV caches and hash tables to the CPU and performs co-execution for the final attentions. 
For \textsc{MagicPIG}, we use the same hyperparameter settings as in Section~\ref{sec:acc}.
Additionally, to evaluate the effect of Rerank Module in an end-to-end scenario, we test the performance of \model{} with and without Rerank Module.

\begin{figure}
    \centering
    \includegraphics[width=\linewidth]{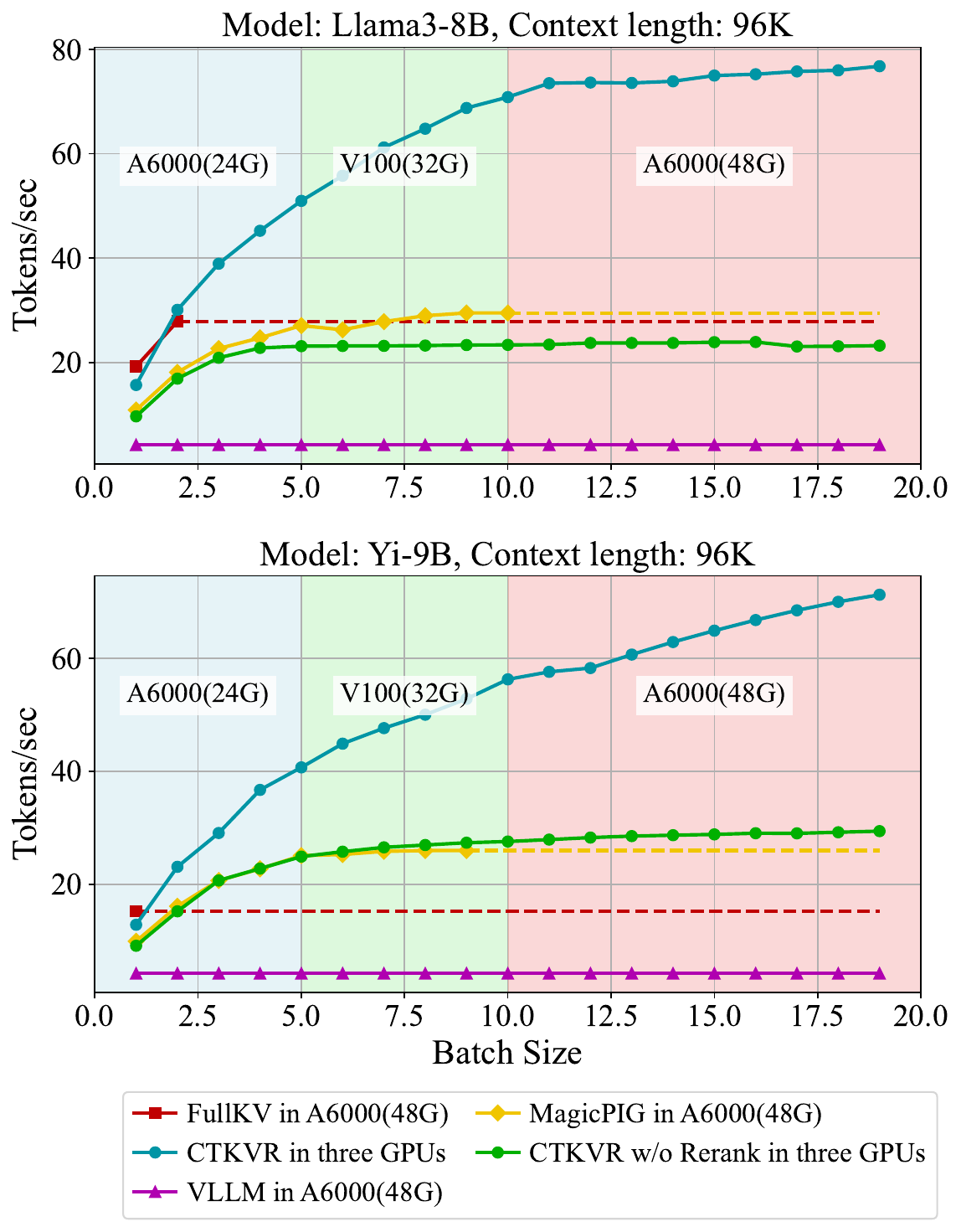}
    \caption{On \texttt{Llama-3-8B} and \texttt{Yi-9B}, \model{} with $\rho' = 512$ support batch sizes up to 10$\times$ and 20$\times$, achieving throughput boost of 3$\times$ and 4$\times$, respectively.}
    \label{fig:throughput}
\end{figure}

\subsubsection{\textbf{End to End Efficiency Analysis}} 
As shown in \autoref{fig:throughput}, our analysis highlights the following observations:
(1) \model{} significantly improves decoding throughput, achieving up to 3$\times$ and 4$\times$ speedups on \texttt{Llama-3-8B} and \texttt{Yi-9B}, respectively. Also, \model{} enables decoding of 96K contexts even on a GPU with only 24GB VRAM.  
(2) By offloading the KV cache to CPU DRAM, \model{} supports substantially larger batch sizes, 10$\times$ for \texttt{Llama-3-8B} and 20$\times$ for \texttt{Yi-9B}, compared to \textsc{FullKV}. 
(3) Compared to \textsc{MagicPIG}, which also employs CPU-GPU co-execution, \model{} avoids storing resource-intensive hash tables on the CPU, thereby supporting up to 2$\times$ larger batch sizes. 
Additionally, due to its more efficient token-locating algorithm, \model{} achieves up to 2$\times$ higher throughput. 
(4) By employing the Rerank module, \model{} further boosts throughput by up to 2$\times$ on both LLMs.

\subsubsection{\textbf{Index Construction Efficiency Analysis}}
We compare the index construction time of \model{} with three approximate nearest neighbor (ANN) methods, \textsc{IVF}, \textsc{HNSW} and \textsc{KMeans}, implemented in the Faiss library.
The evaluation is conducted across varying lengths of Key vectors, ranging from 4K to 128K.

As shown in \autoref{fig:index_construct_speed}, \model{} separately achieves up to a $50\times$, $1000\times$ and $10000\times$ reduction in index construction time compared to \textsc{HNSW}, \textsc{IVF} and \textsc{KMeans} methods.
Meanwhile, \model{} demonstrates significantly greater stability, with smaller fluctuations in construction time as the context length increases, further showcasing its scalability and efficiency for long-context scenarios.

\begin{figure}[!htbp]
    \centering
    \includegraphics[width=\linewidth]{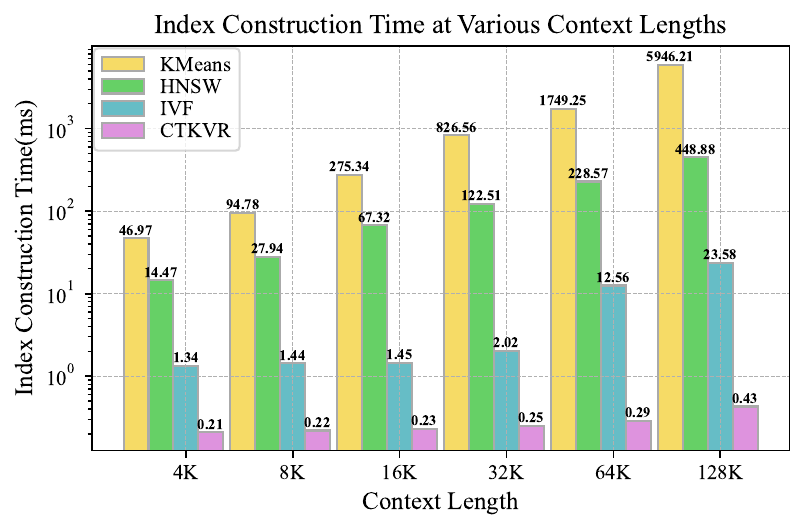}
    \caption{Index construction time of \model{} compared to ANN methods \textsc{HNSW} and \textsc{IVF} across varying Key vector lengths.}
    \label{fig:index_construct_speed}
\end{figure}

\subsection{Parameter and Component Study}
\label{sec:abla}

\subsubsection{\textbf{Parameter Study}}\footnote{Appendix~\ref{sec:tradeoff} provides additional results on the trade-off between computation and accuracy under varying parameters.}
We analyze the impact of four key parameters on \model{} using \textsf{RULER}:
\begin{itemize}[leftmargin=*]
\item \textbf{\emph{Sparsity Budget} $\rho'$}. 
As shown in \autoref{fig:sparse_budget}, we evaluate \model{} under  different $\rho'$ values. 
\model{} consistently surpasses \textsc{ShadowKV} across all tasks, exhibiting smaller accuracy fluctuations across budget variations.
Notably, \model{} achieves near-\textsc{FullKV} accuracy with only budget $\rho' = 0.39\%$, even slightly improving measures on the task \emph{Question Answer 2}.

\item \textbf{\emph{Number of Maintained Centroids} $C$}. 
We analyze the impact of varying $C$ on \model{}'s accuracy.
As shown in \autoref{fig:centroid}(a), accuracy improves as $C$ increases, stabilizing near full KNN performance at around 320 centroids for most datasets, while for the dataset \textsf{niah\_single\_3}, performance continues to improve as $C$ grows. Meanwhile, throughput drops slightly with larger maintained $C$ for increasing time for $Q2Q^{(c)}$ Recall time only accounts for a minor component of the end-to-end time.

\item \textbf{\emph{Number of Retrieved Centroids} $C'$}. 
As shown in \autoref{fig:centroid}(b), accuracy improves with increasing $C'$, stabilizing near full KNN result at around 5 centroids, showing \model{}'s robustness even with a limited number of retrieved centroids.  However, throughput drops more sharply with larger retrieved centroids $C'$ for increasing time for $Q2K$ Rerank time constitutes the predominant portion of end-to-end latency.

\item \textbf{\emph{Position of $C$ maintained centroids}}. 
To illustrate the necessity of selecting the last $C$ centroids as indicated by $Obs. 2$, we added experiments comparing four query centroid selection strategies.
(1) \textsc{Final} (\model{}) uses last query vector in the sequence;
(2) \textsc{Random} selects query from random position;
(3) \textsc{Init} selects from the beginning;
(4) \textsc{Equi} selects query with evenly spaced positions.
We evaluate \model{} with each strategy on the \textsf{Ruler} benchmark using a 64k context and the Llama-3-8B model. As shown \autoref{tb:centroid_pos}, our \textsc{Final} strategy significantly outperforms other strategies.

\begin{table*}[!htbp]
\centering
\begin{tblr}{
  colsep=2pt, rowsep=0.8pt,
      cell{1}{1} = {r=2}{},
    cell{1}{15} = {r=2}{},
    cell{1}{2} = {c=3}{},
    cell{1}{5} = {c=3}{},
    cell{1}{8} = {c=3}{},
    cell{1}{11} = {c=2}{},
    cell{1}{13} = {c=2}{},
    row{1} = {font=\bfseries},
  column{2-15} = {c},
  vline{2,5,8,11,13,15} = {-}{},
  hline{1,3,7} = {-}{},
  row{6} = {bg=lightblue},
}
\textbf{Method} & \textbf{S-Doc} & & & \textbf{M-Doc} & & & \textbf{Summ} & & & \textbf{Few-Shot} & & \textbf{Code} &  & AVG. \\
 & \textit{NQA} & \textit{Multi\_en} & \textit{Qasper} & \textit{HQA} & \textit{Musique} & \textit{2wiki} & \textit{Multinews} & \textit{Qmsum} & \textit{Gov} & \textit{Trec} & \textit{TQA} & \textit{Lcc} & \textit{Repobench} &  \\
\textbf{\model{}$_{1024}^{Init}$}   & 91.67 & 100.00 & 39.58 & 95.83 & 96.88 & 94.27 & 95.83 & 77.08 & 48.96 & 80.56 & 80.83 & 1.04 & 80.21 & 75.60 \\
\textbf{\model{}$_{1024}^{Random}$} & 84.38 & 100.00 & 88.54 & 100.00 & 97.92 & 99.74 & 92.71 & 76.04 & 52.08 & 76.74 & 82.92 & 2.60 & 85.42 & 79.93 \\
\textbf{\model{}$_{1024}^{Equi}$}   & 96.88 & 100.00 & 95.83 & 100.00 & 97.92 & 99.48 & 94.53 & 78.13 & 51.04 & 74.31 & 86.25 & 4.90 & 86.46 & 81.98 \\
\textbf{\model{}$_{1024}^{Final}$}  & 100.00 & 100.00 & 98.96 & 100.00 & 97.92 & 99.48 & 96.09 & 77.08 & 53.13 & 80.90 & 90.42 & 1.04 & 88.54 & 83.35 \\
\end{tblr}
\caption{Accuracy comparison of four query centroid selection strategies across each tasks in \textsf{LongBench}.}
\label{tb:centroid_pos}
\end{table*}

\end{itemize}

\begin{figure}[ht]
    \centering
    \includegraphics[width=\linewidth]{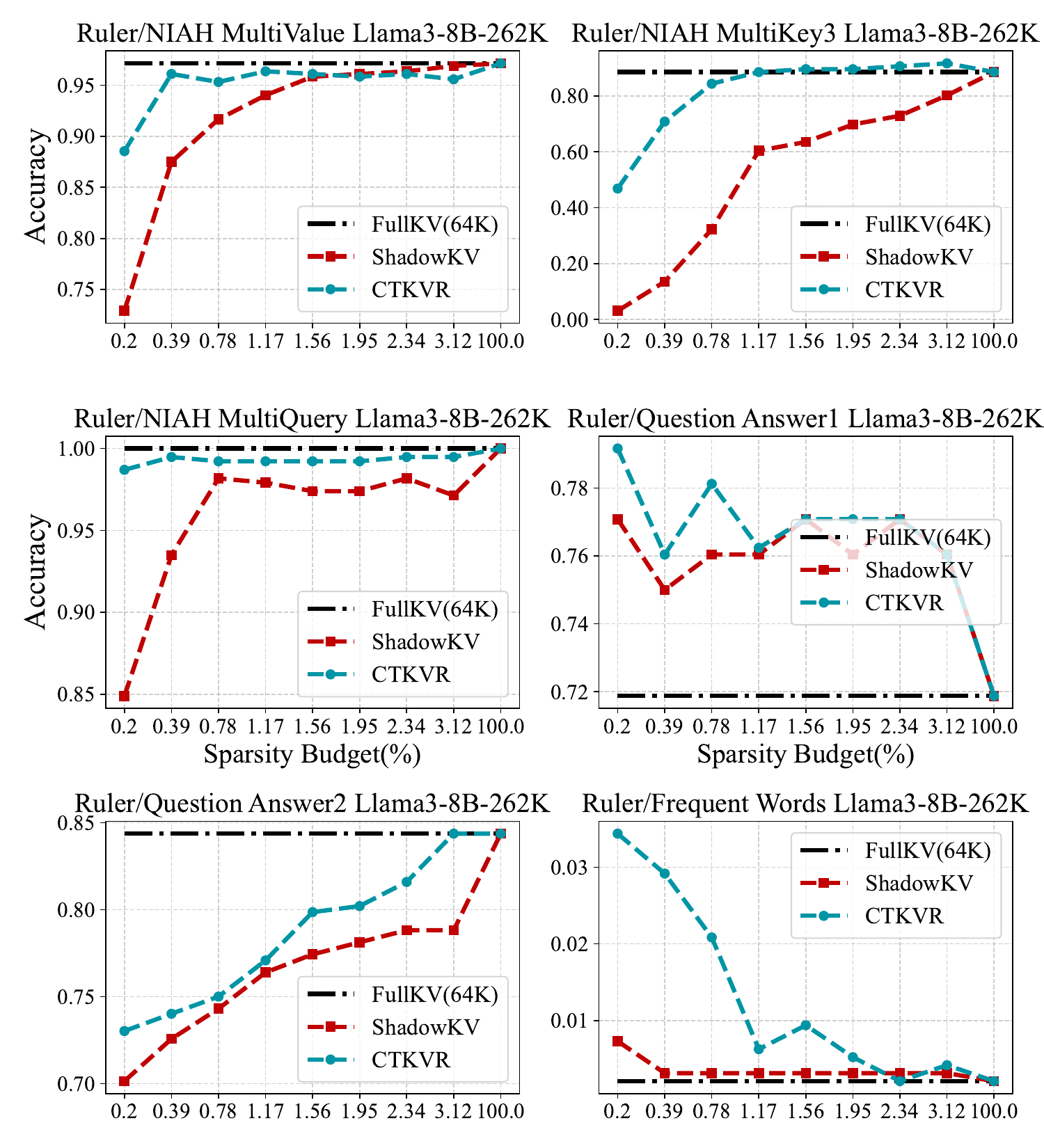}
    \caption{Accuracy vs sparsity budget ($\rho$) on \textsc{FullKV}, \textsc{ShadowKV}, and \model{}.}
    \label{fig:sparse_budget}
\end{figure}



\begin{figure}[ht]
    \centering
    \includegraphics[width=\linewidth]{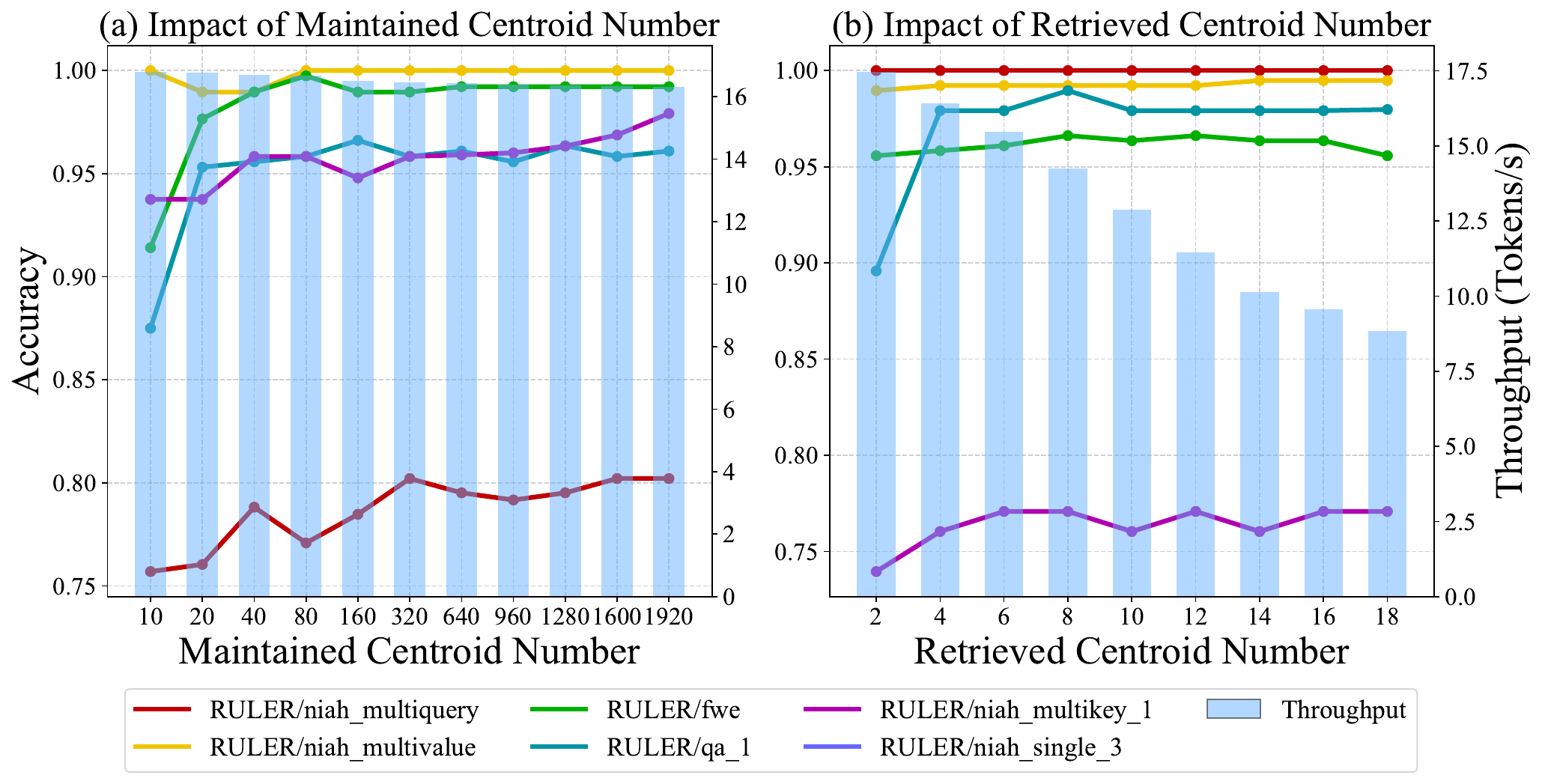}
    \caption{Impact of maintained centroid number $C$ and retrieved centroid number $C'$ on accuracy.}
    \label{fig:centroid}
\end{figure}


\subsubsection{\textbf{Component Study}}
We conduct ablation studies to verify the efficacy of two \model{} modules:
\begin{itemize}[leftmargin=*]
\item \textbf{\emph{Acceleration from the Rerank Module}}. 
We analyze the CPU speedup achieved by the Rerank module under different batch sizes and varying ratios of $\frac{L^{\text{Recall}}}{L^{\text{Rerank}}}$.
As shown in \autoref{fig:rerank}, the speedup grows with a higher Rerank ratio, reaching up to 2$\times$ acceleration, consistent across all batch sizes.
This demonstrates the effectiveness of the Rerank module in optimizing computational efficiency.

\item \textbf{\emph{Enhancing Multi-turn Conversations with Dynamic Centroid Update (DCU)}}.
We evaluate the impact of DCU in multi-turn conversation scenarios.
To simulate this, we challenged \model{} with a multi-turn needle retrieval task (Multi-turn NIAH)\footnote{We create the dataset by modifying the dataset generation script from \textsf{RULER}, generating multiple question-answer pairs for each context for multi-turn conversations.}. As shown in \autoref{tb:multi-turn}, incorporating DCA leads to noticeable performance improvements across different rounds of conversation (ranging from 2 to 8 turns). This highlights the module's ability to dynamically adapt to evolving conversational contexts.
\end{itemize}

\begin{figure}[!htbp]
    \centering
    \includegraphics[width=0.9\linewidth]{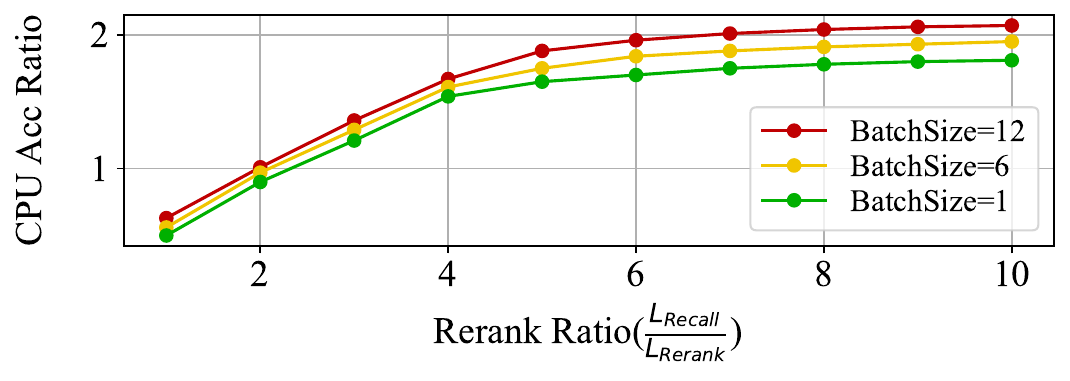}
    \caption{Rerank ratio $\frac{L^\text{Recall}}{L^\text{Rerank}}$ vs CPU speedups.}
    \label{fig:rerank}
\end{figure}


\begin{table}[!htbp]
\centering
\caption{Ablation of DCU in multi-turn conversation.}
\label{tb:multi-turn}
\begin{tblr}{
  colspec={lccccc},
  hline{1-2,4} = {-}{},
  vline{2} = {-}{},
  colsep=4pt, rowsep=0.75pt,
}
Round  & $1$ & $2$ & $4$ & $6$ & $8$ \\
{\model{} w/o DCU}    & 100        & 90.02      & 88.45      & 86.27      & 85.67      \\
{\model{}}            & \textbf{100} & \textbf{95.78}      & \textbf{94.38} & \textbf{94.18} & \textbf{94.02} \\
\end{tblr}
\end{table}
\section{Related Work}

\noindent{\textbf{KV Cache Eviction}} 
These methods reduce memory footprints by retaining critical portions of the KV cache while discarding less relevant entries.
\citet{xiao2023streamingllm} evict tokens farthest from the current focus based on positional locality.
\textsc{H\textsubscript{2}O} \citep{zhang2023ho} and \textsc{SnapKV} \citep{li2024snapkv} prioritize eviction using cumulative attention scores to remove irrelevant tokens.
\citet{huang2024locretenhancingevictionlongcontext} employ an MLP to predict tokens for eviction, while \citet{cai2024pyramidkvdynamickvcache} and \citet{feng2024adakvoptimizingkvcache} allocate storage budgets dynamically across layers or heads based on informativeness.
Unlike token eviction methods, \model{} avoids significant information loss by leveraging sparse attention to selectively retrieve key-value pairs, ensuring both efficiency and high accuracy.


\noindent{\textbf{Block-Level Sparse Attention}} 
These methods summarize the Key cache into compact representations (e.g., mean-pooling) and retrieve blocks based on their similarity to query vectors.
\textsc{Quest} \citep{tang2024quest} and \textsc{Inf-LLM} \citep{xiao2024infllm} focus on mean-pooling crucial keys, while \textsc{QuickLlama} \citep{li2024quickllamaqueryawareinferenceacceleration} utilizes local attention scores for retrieval.
\citet{sun2024shadowkv} simplifies this further by mean-pooling all tokens in each block.
\model{} overcomes block-level limitations by using fine-grained token-level indexing, ensuring precise retrieval of relevant tokens without introducing noise from irrelevant blocks.

\noindent{\textbf{Token-Level Sparse Attention}} 
These methods use ANN search or clustering to retrieve top-$k$ similar KV pairs during decoding.
\citet{liu2024retrievalattentionacceleratinglongcontextllm} leverage \textsc{RoarGraph} \citep{roargraph} for ANN, while \citet{hooper2024squeezed} incorporates hierarchical clustering for gradual token retrieval.
\textsc{MagicPIG} \citep{magicpig} applies LSH to sample and retrieve significant tokens efficiently.
\model{} achieves superior scalability by adopting a lightweight query-centric indexing mechanism, avoiding the computational overhead in LSH or hierarchical clustering while maintaining high retrieval accuracy and throughput. Meanwhile, the lightweight index in \model{} avoids creating large HashMaps in CPU DRAM for LSH, enabling a larger batch size for long-context inference.

\section{Conclusion} \label{sec:conclusion}

We present \model{}, a novel framework for efficient and accurate sparse attention in long-context LLMs. By leveraging the high similarity between adjacent query vectors, \model{} introduces a two-stage retrieval mechanism: query-centroid indexing followed by token-level indexing. This ensures both fast retrieval and high-quality KV cache entries.
Extensive experiments show that \model{} achieves superior accuracy with less than 1\% degradation compared to full KV cache methods across various benchmarks while delivering up to 3$\times$ and 4$\times$ throughput gains on \texttt{Llama-3-8B} and \texttt{Yi-9B}, respectively, at a 96K context length on various standard GPUs.


\clearpage
\bibliographystyle{ACM-Reference-Format}
\bibliography{main}

\clearpage
\appendix
\appendix \label{sec:appendix}
\section{Formal Proof of Obs.~\ref{obs_1}}  \label{sec:appendix_lemma}

Before proceeding with the formal proof, we first prove two auxiliary {Lemmas}~\ref{lem:lem1} and~\ref{lem:lem2}.

\begin{lemma} \label{lem:lem1}
    Let \( A = [a_1, a_2, \ldots, a_n] \) be an array of length \( n \), and let \( B = [b_1, b_2, \ldots, b_n] \) be a rearrangement of \( A \) such that the number of inversions \( t = \left| \{(i, j) \mid i < j \text{ and } a_i > a_j \text{ and } b_i < b_j\} \right| \); then, for any \( 1 \leq m \leq n \), the set \( S_m = \{ a_i \mid 1 \leq i \leq m \text{ and } a_i \notin \{b_1, b_2, \ldots, b_m\} \} \) satisfies \( |S_m| \leq \lfloor \sqrt{t} \rfloor \).
\end{lemma}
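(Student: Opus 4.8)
The plan is to prove the equivalent statement $|S_m|^2 \le t$; since $|S_m|$ is a nonnegative integer, $|S_m| \le \sqrt{t}$ immediately upgrades to $|S_m| \le \lfloor \sqrt{t} \rfloor$. The entire argument will be a double-counting (injection) argument: I will exhibit $|S_m|^2$ distinct pairs $(i,j)$, each of which is genuinely counted by the inversion set defining $t$. Throughout I read ``the first $m$ entries of $A$'' as its $m$ largest values, i.e.\ I measure inversions relative to $A$'s own value order (equivalently, assume $A$ is sorted in decreasing order, which is the regime in which the lemma is applied); I will flag why this reading matters below.

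Write $k = |S_m|$ and introduce the complementary set $S_m' = \{\, b_j \mid j \le m \text{ and } b_j \notin \{a_1,\dots,a_m\} \,\}$. The first key step is a counting observation: because $B$ is a rearrangement of $A$, the prefix multisets $\{a_1,\dots,a_m\}$ and $\{b_1,\dots,b_m\}$ have the same size $m$, and for two equal-size sets one always has $|X \setminus Y| = |Y \setminus X|$; hence $|S_m'| = |S_m| = k$. The second, structural step interprets these two sets: each $v \in S_m$ is a top-$m$ value of $A$ that has been displaced to a position $j > m$ in $B$, whereas each $w \in S_m'$ is a value of rank larger than $m$ that intrudes into a position $i \le m$ in $B$; consequently every displaced value strictly exceeds every intruder, $v > w$. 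The final step is the injection: for a pair $(w,v) \in S_m' \times S_m$ with respective $B$-positions $i \le m$ and $j > m$, we get both $i < j$ and $b_i = w < v = b_j$, which is exactly an inversion tallied by $t$. Sending $(w,v)$ to its position pair $(i,j)$ is injective, since positions are determined uniquely by the values, so these pairs give $k^2$ distinct inversions and therefore $t \ge k^2$, completing the proof.

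The step I expect to be the main obstacle is not the counting itself but reconciling the \emph{position}-based definition of $t$ with the \emph{value}-based structure driving $S_m$. The clean product bound $t \ge k^2$ relies on ``the first $m$ positions of $A$'' coinciding with ``the $m$ largest values of $A$'': only then does the displaced-value/intruder dichotomy force $b_i < b_j$ at the relevant positions. I would therefore make this reading explicit at the start (or equivalently assume $A$ is sorted in decreasing order so that $a_i > a_j \Leftrightarrow i < j$), since for an arbitrary unsorted $A$ the prefix need not be the top-$m$ set and the inequality can degenerate. With that convention fixed, the remaining details (handling possible ties in values by a fixed tie-breaking rule, and verifying injectivity) are routine.
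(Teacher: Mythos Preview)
Your approach matches the paper's: both introduce the complementary ``intruder'' set (the paper calls it $T_m$, you call it $S_m'$) of the same size $s=|S_m|$, and argue that the $s^2$ displaced--intruder pairs each contribute a distinct inversion, giving $t\ge s^2$. Your explicit reading of $A$ as sorted in decreasing order is exactly the regime in which the lemma is applied (in the theorem, $A$ is obtained by sorting in descending order of similarity), and your injection via $B$-positions is a cleaner realization of the step the paper handles more informally.
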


\begin{proof}
    Let \( S_m \) be the set of elements that are among the first \( m \) elements in \( A \) but are not among the first \( m \) elements in \( B \), and let \( s = |S_m| \). Similarly, there are \( s \) elements from the last \( n-m \) positions of \( A \) that are moved into the first \( m \) positions of \( B \). Denote these elements as the set \( T_m \).

    For each \( x \in S_m \) and \( y \in T_m \), if \( x > y \) in \( A \), then in \( B \), \( y \) is in the first \( m \) positions while \( x \) is in the last \( (n-m) \) positions, forming an inversion. Therefore, the total number of inversions \( t \) satisfies:
    \[
    t \geq \sum_{x \in S_m} \sum_{y \in T_m} \mathbf{1}_{\{x > y\}} \triangleq d,
    \]
    where \( \mathbf{1}_{\{x > y\}} \) is the indicator function (1 if \( x > y \), 0 otherwise).

    Since \( |S_m| = |T_m| = s \), there are \( s^2 \) pairs \( (x, y) \). To estimate \( d \), we observe that in the worst case, each \( x \in S_m \) must form at least one inversion with some \( y \in T_m \). This is because \( x \) and \( y \) are swapped between the first \( m \) and last \( (n-m) \) positions, and \( x > y \) must hold for at least one such pair to ensure \( x \) is not in the first \( m \) positions of \( B \). Thus, we have:
    \[
    d \geq s.
    \]

    However, a tighter bound can be derived by considering the total number of possible inversions. Since there are \( s^2 \) pairs \( (x, y) \), and each pair contributes at most one inversion, the maximum number of inversions is \( s^2 \). Therefore:
    \[
    t \geq d \geq s^2.
    \]

    Rearranging this inequality, we obtain:
    \[
    s \leq \sqrt{t}.
    \]

    Since \( s \) is an integer, it follows that:
    \[
    s \leq \lfloor \sqrt{t} \rfloor.
    \]

    Combining the above results, we conclude that \( |S_m| = s \leq \lfloor \sqrt{t} \rfloor \), completing the proof.
\end{proof}

\begin{lemma} \label{lem:lem2}
    Let \( K_1, K_2, Q_1, Q_2 \in \mathbb{R}^n \) be unit vectors, and assume \( Q_1 \cdot K_1 - Q_1 \cdot K_2 \triangleq \delta \ge 0 \) denote the difference in projections of \( Q_1 \) onto \( K_1 \) and \( K_2 \). Suppose the cosine similarity between \( Q_1 \) and \( Q_2 \) satisfies \( \cos(Q_1, Q_2) = \theta \), where \( \theta \in [0, \pi] \) is the angle between them. If \( \delta > 4 \cdot \sin\left(\frac{\theta}{2}\right) \), then it must satisfy 
    \[
    Q_2 \cdot K_1 - Q_2 \cdot K_2 \ge 0.
    \]
\end{lemma}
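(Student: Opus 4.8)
The plan is to reduce the claim to a single application of Cauchy--Schwarz after isolating the difference vector. First I would set $w \triangleq K_1 - K_2$, so that the hypothesis reads $Q_1 \cdot w = \delta$ and the goal becomes simply $Q_2 \cdot w \ge 0$. The key algebraic observation is the decomposition
\[
Q_2 \cdot w = Q_1 \cdot w + (Q_2 - Q_1) \cdot w = \delta + (Q_2 - Q_1) \cdot w,
\]
which shows it suffices to ensure the perturbation term $(Q_2 - Q_1) \cdot w$ does not drop below $-\delta$; for that it is enough to bound its magnitude.

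Next I would estimate the two norms that appear after writing $|(Q_2 - Q_1) \cdot w| \le \|Q_2 - Q_1\| \cdot \|w\|$. Since $Q_1, Q_2$ are unit vectors separated by angle $\theta$, the chord-length identity gives $\|Q_2 - Q_1\|^2 = 2 - 2\cos\theta = 4\sin^2(\theta/2)$, hence $\|Q_2 - Q_1\| = 2\sin(\theta/2)$. Since $K_1, K_2$ are also unit vectors, the triangle inequality yields $\|w\| = \|K_1 - K_2\| \le 2$, the extremal case being antipodal Keys. Multiplying these two bounds produces the clean estimate $|(Q_2 - Q_1) \cdot w| \le 4\sin(\theta/2)$.

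Finally I would combine the pieces: from the decomposition together with the hypothesis $\delta > 4\sin(\theta/2)$,
\[
Q_2 \cdot w \ge \delta - |(Q_2 - Q_1) \cdot w| \ge \delta - 4\sin(\theta/2) > 0,
\]
which establishes $Q_2 \cdot K_1 - Q_2 \cdot K_2 \ge 0$, in fact with strict inequality. I do not anticipate a genuine obstacle; the argument is a two-line norm estimate. The only points warranting care are the geometric identity $\|Q_1 - Q_2\| = 2\sin(\theta/2)$ and the observation that the crude bound $\|K_1 - K_2\| \le 2$ is precisely what pins down the constant $4$ in the hypothesis, so that no stronger assumption on the Keys is required for the constant to be tight.
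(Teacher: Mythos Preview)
Your proof is correct and follows essentially the same approach as the paper: both use the chord-length identity $\|Q_1-Q_2\|=2\sin(\theta/2)$ together with Cauchy--Schwarz to bound the perturbation of the inner-product difference by $4\sin(\theta/2)$. The only cosmetic difference is that the paper applies Cauchy--Schwarz separately to $(Q_2-Q_1)\cdot K_1$ and $(Q_2-Q_1)\cdot K_2$ (each bounded by $2\sin(\theta/2)$ via $\|K_i\|=1$) and then combines, whereas you bundle $w=K_1-K_2$ first and use $\|w\|\le 2$; the resulting estimate is identical.
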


\begin{proof}
    We begin by noting that since \( Q_1 \) and \( Q_2 \) are unit vectors, the distance between them can be expressed as:
    \[
    \|Q_2 - Q_1\| = 2 \sin\left(\frac{\theta}{2}\right).
    \]
    
    Using the Cauchy-Schwarz inequality, we have:
    \[
    |Q_2 \cdot K_1 - Q_1 \cdot K_1| \le \|Q_2 - Q_1\| \cdot \|K_1\| = 2 \sin\left(\frac{\theta}{2}\right),
    \]
    and similarly,
    \[
    |Q_2 \cdot K_2 - Q_1 \cdot K_2| \le \|Q_2 - Q_1\| \cdot \|K_2\| = 2 \sin\left(\frac{\theta}{2}\right).
    \]
    
    From these inequalities, we deduce:
    \[
    Q_2 \cdot K_1 \ge Q_1 \cdot K_1 - 2 \sin\left(\frac{\theta}{2}\right),
    \]
    and
    \[
    Q_2 \cdot K_2 \le Q_1 \cdot K_2 + 2 \sin\left(\frac{\theta}{2}\right).
    \]
    
    Combining these results with the given condition \( Q_1 \cdot K_1 - Q_1 \cdot K_2 = \delta \), we obtain:
    \[
    Q_2 \cdot K_1 - Q_2 \cdot K_2 \ge \delta - 4 \sin\left(\frac{\theta}{2}\right).
    \]
    
    Since \( \delta > 4 \sin\left(\frac{\theta}{2}\right) \), it follows that:
    \[
    Q_2 \cdot K_1 - Q_2 \cdot K_2 \ge 0.
    \]
\end{proof}
\begin{theorem}
    Considering two vectors $Q, Q' \in \mathbb{R}^{d}$, with a search space of vector set $\{K_i\}_{i=1}^N$, a top-$k$ retrieval vector number $K$ and top-$K$ overlap percentage $p \in [0,1]$, there exist a threshold $\varepsilon$ such that when the cosine similarity of two vectors $cos(Q, Q') > \varepsilon$, the top-$K$ vector sets $S$ and $S'$ retrieved by V and V' would satisfy $\frac{|S\cap S'|}{|S|} > p$.
\end{theorem}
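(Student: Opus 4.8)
The plan is to view the two score rankings as a single pair of orderings and control how much the top-$K$ prefix can change using the two lemmas: Lemma~\ref{lem:lem1} converts a bound on the number of order inversions into a bound on prefix disagreement, while Lemma~\ref{lem:lem2} shows that such inversions can only arise from Key pairs whose $Q$-scores are very close, a population that shrinks as $\cos(Q,Q')\to 1$. Concretely, I would first (after normalizing each $K_i$ to a unit vector so that Lemma~\ref{lem:lem2} applies to the inner-product scores) list the Keys in decreasing order of $Q\cdot K_i$ to form the array $A$ and in decreasing order of $Q'\cdot K_i$ to form the array $B$. With this identification, the quantity $t$ in Lemma~\ref{lem:lem1} is exactly the number of discordant pairs, i.e.\ pairs $(K_i,K_j)$ whose relative order differs under $Q$ and under $Q'$, and the set $S_m$ of Lemma~\ref{lem:lem1} (taking $m=K$, the retrieval size) is precisely $S\setminus S'$. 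Hence $|S\setminus S'|\le \lfloor\sqrt{t}\rfloor$, and since $|S|=K$ the overlap ratio equals $1-|S\setminus S'|/K \ge 1-\sqrt{t}/K$.

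Next I would bound $t$ geometrically. Let $\theta$ be the angle between $Q$ and $Q'$, so $\cos(Q,Q')=\cos\theta$. A pair $(K_i,K_j)$ is discordant only if it is strictly ordered one way under $Q$ and the opposite (or tied) way under $Q'$; taking $Q\cdot K_i>Q\cdot K_j$ with gap $\delta=Q\cdot K_i-Q\cdot K_j>0$, the contrapositive of Lemma~\ref{lem:lem2} forces $\delta\le 4\sin(\theta/2)$ for any such discordant pair. Therefore $t\le f\!\left(4\sin(\theta/2)\right)$, where $f(\eta)=\bigl|\{(i,j):|Q\cdot K_i-Q\cdot K_j|\le\eta\}\bigr|$ counts the Key pairs whose $Q$-scores lie within $\eta$ of each other. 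The function $f$ is a non-decreasing step function of $\eta$, and assuming the $N$ scores $\{Q\cdot K_i\}$ are distinct (generic position) we have $f(0)=0$.

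Finally I would assemble the threshold. To guarantee overlap $>p$ it suffices, by the first paragraph, to make $\sqrt{t}<(1-p)K$, i.e.\ $t<(1-p)^2K^2$. Because $f$ increases from $f(0)=0$, there is a largest value $\eta^\ast>0$ such that $f(\eta)<(1-p)^2K^2$ holds whenever $\eta<\eta^\ast$, and $\eta^\ast>0$ exists as soon as $(1-p)^2K^2>0$, which holds for any $p<1$ and $K\ge 1$. Solving $4\sin(\theta/2)<\eta^\ast$ for $\theta$ and translating back through the cosine then yields the claimed threshold: setting $\varepsilon=\cos\bigl(2\arcsin(\eta^\ast/4)\bigr)$, any $Q,Q'$ with $\cos(Q,Q')>\varepsilon$ satisfy $\theta<2\arcsin(\eta^\ast/4)$, hence $4\sin(\theta/2)<\eta^\ast$, so $t<(1-p)^2K^2$ and $|S\cap S'|/|S|>p$.

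The main obstacle is the degenerate-geometry handling rather than the estimates: the clean statement $f(0)=0$ requires the $Q$-scores to be distinct, and more generally $\eta^\ast>0$ requires $(1-p)^2K^2$ to exceed the number of exactly tied pairs, so the argument needs either a genericity assumption or a tie-breaking convention baked into how $S$ and $S'$ are defined. A secondary point to pin down is the normalization: Lemma~\ref{lem:lem2} is stated for unit $K_i$, whereas retrieval ranks by the raw inner product $Q\cdot K_i$, so I would either assume the Keys are unit-normalized or carry the norms $\|K_i\|$ through the gap estimate. Everything else---identifying $t$ with discordant pairs, the monotonicity of $f$, and the final cosine inversion---is routine once the setup is fixed.
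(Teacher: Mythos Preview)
Your proposal is correct and follows essentially the same route as the paper: both arguments sort by the two score functions, invoke Lemma~\ref{lem:lem1} to turn an inversion count $t$ into the bound $|S\setminus S'|\le\sqrt{t}$, and then use the contrapositive of Lemma~\ref{lem:lem2} to show that only Key pairs with small $Q$-score gap can contribute inversions. The only cosmetic difference is that the paper phrases the final step via the order statistic $d_{(t)}$ of the pairwise-gap multiset $\Delta_A$ (setting $\theta=2\arcsin(d_{(K^2(1-p)^2)}/4)$), whereas you phrase it via the counting function $f(\eta)$ and its threshold $\eta^\ast$; these are inverse descriptions of the same object, and your explicit remarks on tie-breaking and the unit-norm hypothesis of Lemma~\ref{lem:lem2} are caveats the paper leaves implicit.
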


\begin{proof}
Let the search space \(\{K_i\}_{i=1}^N\) be sorted in descending order of similarity to \( Q \) as array \( A = [a_1, a_2, \ldots, a_N] \), where \( a_i = \cos(Q, K_i) \). Similarly, sort the vectors by similarity to \( Q' \) as array \( B = [b_1, b_2, \ldots, b_N] \), where \( b_i = \cos(Q', K_i) \). Define the number of inversions \( t \) as:
\[
t = \left| \{(i, j) \mid i < j \text{ and } a_i > a_j \text{ and } b_i < b_j\} \right|,
\]
which measures the difference between the two sorted arrays \( A \) and \( B \). By {Lemma} \ref{lem:lem1}, for any \( 1 \leq m \leq N \), the set \( S_m = \{ a_i \mid 1 \leq i \leq m \text{ and } a_i \notin \{b_1, b_2, \ldots, b_m\} \} \) satisfies:
\[
|S_m| \leq \lfloor \sqrt{t} \rfloor.
\]
Let \( m = K \). Then \( S_K \) represents the set of elements in the top-\( K \) of \( A \) that are not in the top-\( K \) of \( B \), and its size satisfies \( |S_K| \leq \sqrt{t} \). Therefore, the size of the intersection \( S \cap S' \) is:
\[
|S \cap S'| = K - |S_K| \geq K - \sqrt{t},
\]
and the overlap ratio is:
\[
\frac{|S \cap S'|}{K} \geq 1 - \frac{\sqrt{t}}{K}.
\]
To ensure \( \frac{|S \cap S'|}{K} > p \), we require:
\[
\begin{aligned}
1 - \frac{\sqrt{t}}{K} &> p \\
\implies \sqrt{t} &< K(1 - p) \\
\implies t &< K^2(1 - p)^2.
\end{aligned}
\]

Next, we use {Lemma}~\ref{lem:lem2} to control the number of inversions \( t \). Define the set of pairwise similarity differences in \( A \) as:
\[
\Delta_A = \{ d \mid d = a_i - a_j \text{ for } a_i, a_j \in A \text{ and } i < j \}.
\]
Denoting the \( t \)-th smallest element in \( \Delta_A \) as $d_{(t)}$, we set $\theta = 2\arcsin\frac{d_{(t)}}{4}$, which means $4\sin\frac{\theta}{2}=d_{(t)}$. According to {Lemma} \ref{lem:lem2}, there would be at most $t$ inversions.

Therefore, we could set $\theta = 2\arcsin\frac{d_{(K^2(1-p)^2)}}{4}$ for guarantee that $\frac{|S\cap S'|}{|S|} > p$.
\end{proof}

\section{Additional Comparative analysis with \model{} and other methods}




\subsection{Comparison of \textsc{MagicPIG} and \model{} within same sparsity ratio} \label{sec:dyn_spar}
\textsc{MagicPIG} uses an LSH-based sampling approach, and unlike top-$k$ methods, its sparsity budget cannot be precisely fixed. Experimentally, we adopted the recommended setting from the paper of \textsc{MagicPIG}: 9 hash functions and 120 hash tables. As shown in Table 8 in the \textsc{MagicPIG} paper, this yields a sparsity budget of around 4\%, which exceeds 512 tokens in the vast majority of benchmarks we evaluate. As expected, reducing the budget to 512 would further degrade performance of \textsc{MagicPIG}. Moreover, to enable a fairer comparison, we also evaluated \model{} under the same 4\% sparsity ratio in \textsf{Ruler} benchmark on \textsf{Llama-3-8B}. As shown in \autoref{tb:dyn_spar}, \textsc{\model{}} with dynamic 4\% sparsity budget achieves higher accuracy than both MagicPIG with 4\% sparsity ratio and \textsc{\model{}} with a fixed budget of 512.

\begin{table}[!htbp]
\centering
\footnotesize
\begin{tblr}{
  colsep=2pt, rowsep=0.8pt,
  column{2-7} = {c},
  hline{1,2,3,6} = {-}{},
  vline{2,7} = {-}{},
  row{2} = {bg=mygray},
  row{6} = {bg=lightblue},
}
\textbf{Context Len.} & \textbf{8K} & \textbf{16K} & \textbf{32K} & \textbf{64K} & \textbf{128K} & \textbf{AVG.} \\
\textsc{FullKV} & 90.97 & 90.10 & 86.17 & 83.06 & 79.65 & 85.99 \\
\textsc{MagicPIG (dyn.4\%)} & 85.04 & 85.00 & 80.34 & 75.27 & 67.51 & 78.63 \\
{\model{}$_{512}$ (fixed)} & \textbf{89.90} & 89.65 & \textbf{86.42} & 82.71 & 74.82 & 84.70 \\
{\model{} \textsc{(dyn.4\%)}} & 89.84 & \textbf{89.80} & 86.21 & \textbf{83.12} & \textbf{78.93} & \textbf{85.58} \\
\end{tblr}
\caption{Accuracy comparison of \model{} and \textsc{MagicPIG} with 4\% sparsity ratio on \texttt{Llama-3-8B-262K}.}
\label{tb:dyn_spar}
\end{table}

\subsection{Comprehensive Throughput and Accuracy comparison of \model{} and \textsc{MagicPIG} under varying parameter settings} \label{sec:compre_compare}
To provide a comprehensive comparison between MagicPIG and \model{} in terms of both accuracy and throughput, we selected three configurations in hyperparameters of MagicPIG, which was all tested and recommended in the paper of MagicPIG: (8 hash functions, 75 hash tables), (9 hash functions, 120 hash tables), and (10 hash functions, 150 hash tables). These configurations represent a trade-off spectrum, ranging from higher accuracy with lower throughput to lower accuracy with higher throughput. For \model{}, we used the same hyperparameter settings as reported in \autoref{fig:centroid}.

We evaluated accuracy of these methods on the \textsf{Ruler} benchmark with \textsf{LLaMA-3-8B} and throughput across different batch sizes under a 96k context length setting. As shown in \autoref{tb:magicpig_ablation1} and \autoref{fig:magicpig_ablation2}, \model{} outperforms the most accurate configuration (8, 75) of MagicPIG in terms of accuracy, while also achieving higher throughput than the most efficient (10, 150) MagicPIG configuration. These results demonstrate the superiority of \model{} in balancing both accuracy and efficiency.

\begin{table}[!htbp]
\centering
\footnotesize
\begin{tblr}{
  colsep=2pt, rowsep=0.8pt,
  column{2-7} = {c},
  hline{1,2,3,6,7} = {-}{},
  vline{2,7} = {-}{},
  row{2} = {bg=mygray},
  row{6} = {bg=lightblue},
}
\textbf{Method} & \textbf{8K} & \textbf{16K} & \textbf{32K} & \textbf{64K} & \textbf{128K} & \textbf{Mean} \\
\textbf{FullKV} & 90.97 & 90.10 & 86.17 & 83.06 & 79.65 & 85.99 \\
\textsc{MagicPIG}(10, 150) & 73.34 & 79.31 & 79.21 & 75.10 & 70.37 & 62.69 \\
\textsc{MagicPIG}(9, 120) & 81.55 & 87.97 & 87.18 & 82.75 & 78.91 & 70.93 \\
\textsc{MagicPIG}(8, 75) & 85.04 & 85.00 & 80.34 & 75.27 & 67.51 & 78.63 \\
\model{}$_{512}$ & 89.90 & 89.65 & 86.42 & 82.71 & 74.82 & \textbf{84.70} \\
\end{tblr}
\caption{On \texttt{Llama-3-8B} in \textsf{RULER} benchmark, \model{} achieves higher accuracy comparing to the most accurate configuration of \textsc{MagicPIG}.}
\label{tb:magicpig_ablation1}
\end{table}

\begin{figure}[!htbp]
    \centering
    \includegraphics[width=\linewidth]{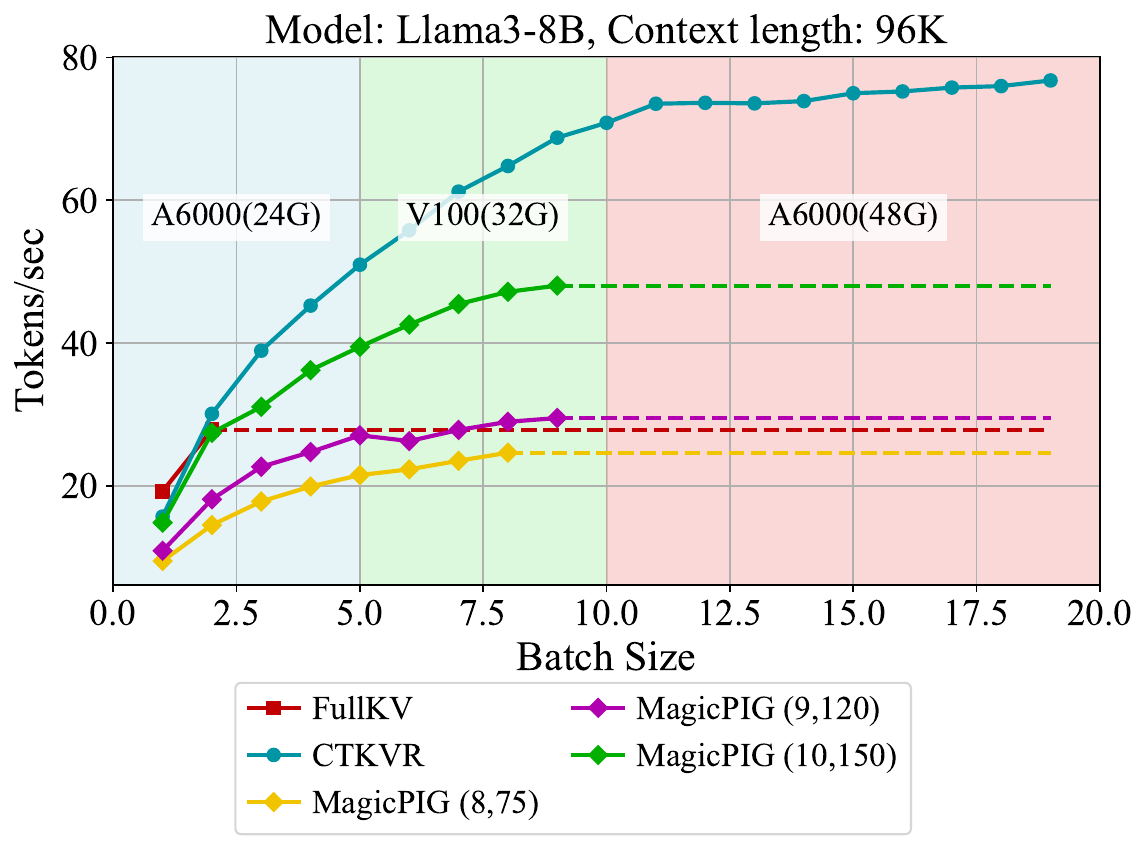}
    \caption{On \textsf{Llama-3-8B} \model{} with $\rho'=512$ achieves throughtput boost of $1.5\times$ comparing to most efficient configuration of \textsc{MagicPIG}.}
    \label{fig:magicpig_ablation2}
\end{figure}

\subsection{Comparison of \model{} and \textsc{RetrievalAttention}} \label{sec:retrievalattn}

While both \model{} and \textsc{RetrievalAttention} leverage query similarity for attention computation, \model{} more effectively exploits adjacent-query similarity through attention-specific optimizations. Built on a fundamentally different clustering algorithm, \model{} addresses the inefficiencies of \textsc{RetrievalAttention} in both the prefilling and decoding stages, while maintaining comparable accuracy.

Notably, \textsc{RetrievalAttention} depends on the \textsc{RoarGraph} algorithm for index construction and search, which is computationally intensive. As shown in \autoref{tb:retrievalattn2}, \textsc{RetrievalAttention} incurs up to $10000\times$ index construction time than \model{} across a range of context lengths, making it less practical for real-time inference. Moreover, as demonstrated in \autoref{tb:retrievalattn1}, \model{} achieves similar accuracy to \textsc{RetrievalAttention} on the \textsf{RULER} benchmark using \textsf{LLaMA-3-8B}, under the same sparsity budget (1024).

\begin{table}[!htbp]
\centering
\footnotesize
\begin{tblr}{
  colsep=2pt, rowsep=0.8pt,
  column{2-7} = {c},
  hline{1,2,3,4,5} = {-}{},
  vline{2,7} = {-}{},
  row{2} = {bg=mygray},
  row{4} = {bg=lightblue},
}
\textbf{Method} & \textbf{8K} & \textbf{16K} & \textbf{32K} & \textbf{64K} & \textbf{128K} & \textbf{Mean} \\
\textsc{FullKV} & 90.97 & 90.10 & 86.17 & 83.06 & 79.65 & 85.99 \\
\textsc{RetrievalAttn}$_{1024}$ & 90.10 & 89.92 & \textbf{86.42} & 82.96 & \textbf{77.01} & \textbf{85.28} \\
\model{}$_{1024}$ & \textbf{90.14} & \textbf{89.93} & 86.20 & 83.38 & \textbf{76.60} & 85.25 \\
\end{tblr}
\caption{Accuracy comparison of \model{}, \textsc{RetrievalAttention} on \textsf{RULER} across varying context lengths on \texttt{Llama-3-8B}.}
\label{tb:retrievalattn1}
\end{table}

\begin{table}[!htbp]
\centering
\footnotesize
\begin{tblr}{
  colsep=2pt, rowsep=0.8pt,
  column{2-7} = {c},
  hline{1,2,3,4} = {-}{},
  vline{2} = {-}{},
}
\textbf{Method} & \textbf{4K} & \textbf{8K} & \textbf{16K} & \textbf{32K} & \textbf{64K} & \textbf{128K} \\
\textsc{RetrievalAttn}$_{1024}$ & 140 & 289 & 587 & 1225 & 2463 & 4247 \\
\model{}$_{1024}$ & 0.21 & 0.22 & 0.23 & 0.25 & 0.29 & 0.43 \\
\end{tblr}
\caption{Index construction time(ms) of \model{} compared to \textsc{RetrievalAttention} across varing Key vector lengths.}
\label{tb:retrievalattn2}
\end{table}

\subsection{Comparison of \model{} and \textsc{SqueezedAttention}} \label{sec:squeezeattn}
Though \model{} and concurrent work \textsc{SqueezedAttention} share similar insight of two-stage retrieval, \model{} makes better use of adjacent-query similarity features in attention-specific optimization. With a completely different clustering algorithm, \model{} addresses limitations in accuracy and prefilling efficiency of \textsc{SqueezedAttention}:
\subsubsection{Inaccuracy for not considering key magnitudes}
The goal of top-$k$ retrieval is to find keys with the highest $QK^T$ scores for a given query. Thus, an effective clustering strategy should group keys with similar $QK^T$ scores, allowing irrelevant keys to be filtered out while preserving important keys.

\textsc{SqueezedAttention} clusters keys using cosine similarity, which overlooks magnitude information of the key crucial to $QK^T$ scores. This may group dissimilar keys together and lead to the loss of important context during retrieval. In contrast, \model{} clusters keys based on their relevance to a representative query, preserving high $QK^T$ scores within each cluster. As shown in \textbf{Obs 2} and \textbf{Theorem 1}, queries similar to a cluster’s representative query tend to retrieve keys with high $QK^T$ scores, enhancing the retention of relevant information.

Experimentally, we evaluate \textsc{SqueezedAttention} on the \textsf{RULER} benchmarks with a 32K context length. To create a more challenging retrieval setting than the paper of \textsc{SqueezedAttention} with sparse budget of 3k, we reduce the budget to 512. As shown in \autoref{tb:squeeze_comparison}, \model{} outperforms \textsc{SqueezedAttention} across all datasets, demonstrating stronger retrieval capability.

\begin{table*}[!htbp]
\centering
\footnotesize
\begin{tblr}{
  colsep=2pt, rowsep=0.8pt,
  column{2-15} = {c},
  hline{1,2,3,4,5} = {-}{},
  vline{2,15} = {-}{},
  row{2} = {bg=mygray},
  row{4} = {bg=lightblue},
}
\textbf{Method} & \textbf{single1} & \textbf{single2} & \textbf{single3} & \textbf{Mkey1} & \textbf{Mkey2} & \textbf{Mquery} & \textbf{Mvalue} & \textbf{qa1} & \textbf{qa2} & \textbf{fwe} & \textbf{vt} & \textbf{cwe} & \textbf{Mkey3} & \textbf{Mean} \\
\textsc{FullKV} & 100.00 & 100.00 & 100.00 & 100.00 & 100.00 & 100.00 & 93.48 & 80.20 & 60.41 & 90.62 & 95.41 & 3.22 & 96.87 & 86.17 \\
\textsc{SqueezedAttention} & 100.00 & 57.29 & 41.20 & 40.63 & 32.29 & 38.92 & 10.68 & 79.17 & 56.25 & 30.56 & 13.95 & 20.45 & 28.90 & 42.33 \\
\model{}$_{512}$ & 100.00 & 100.00 & 100.00 & 100.00 & 100.00 & 100.00 & 90.88 & 80.20 & 58.33 & 80.90 & 95.20 & 21.04 & 96.88 & 86.42 \\
\end{tblr}
\caption{Accuracy comparison of \model{} and \textsc{SqueezedAttention} under \textsf{RULER} benchmark on \texttt{Llama-3-8B}.}
\label{tb:squeeze_comparison}
\end{table*}

\subsubsection{Prefilling inefficiency for slow KMeans}
An analysis of the \textsc{SqueezedAttention} codebase reveals that its clustering is performed online for each input context, rather than generating reusable offline clusters for downstream tasks. Moreover, it relies on the KMeans algorithm for cluster construction, an expensive iterative process (up to 300 iterations in practice). In contrast, \model{} uses a single-pass $QK^T$-based assignment, optimized for GPU execution, resulting in a much faster and more efficient clustering process. As shown in \autoref{fig:index_construct_speed}, \model{} achieves up to $10000\times$ speedup across various context lengths compared to KMeans in \textsc{SqueezedAttention}.

\section{Additional Parameter and Component Experiments on \model{}}
\subsection{Model Architecture}
\autoref{tb:model_info} compares the architectural parameters differences of two models \texttt{Llama-3-8B-262K} and \texttt{Yi-9B-200K} used in our experiments. All models implement the \emph{Grouped-Query Attention} (GQA), where multiple query heads share one single Key and Value head for reducing storage overhead of the KV Cache.

\begin{table}[!htbp]
\centering
\caption{Architectural parameters of used LLMs.}
\label{tb:model_info}
\footnotesize
\begin{tblr}{
  vline{2} = {-}{},
  hline{1-2,4} = {-}{},
  colsep=4pt, rowsep=1.5pt,
  column{2-8} = {c}, 
}
                & \textbf{Layers} & \textbf{Query Head} & \textbf{KV Head}  \\
\texttt{Llama-3-8B-262K} & 32     & 32         & 8               \\
\texttt{Yi-9B-200K}      & 48     & 32         & 4                
\end{tblr}
\end{table}

\subsection{Deduplication Coefficient \mbox{$\alpha$} under Different \mbox{$\rho$, $C$ and $C'$} Values} \label{sec:appendix_alpha}

We evaluate the stability of the deduplication coefficient $\alpha$ by analyzing its behavior under varying values of $\rho$, $C$, and $C'$.
For this experiment, only $\rho$, $C$, and $C'$ are varied, with all other variables fixed at their default values, namely $\rho = 2560$, $C = 1024$, and $C' = 4$. 

As shown in \autoref{fig:alpha_abla}, $\alpha$ remains consistently below $0.5$ across the majority of tested configurations, further reinforcing the importance of the deduplication mechanism.

\begin{figure*}[htbp]
    \centering
    \includegraphics[width=\linewidth]{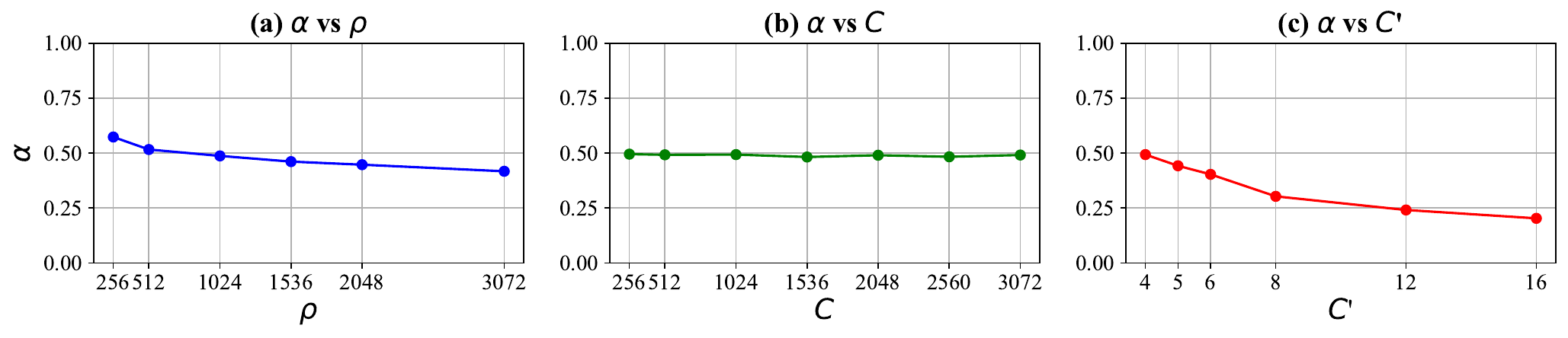}
    \caption{Impact of $\rho$, $C$, $C'$ on deduplication coefficient $\alpha$.}
    \label{fig:alpha_abla}
\end{figure*}


\subsection{Additional Result on \textsf{Needle-in-a-haystack}} \label{sec:appendix_needle}

\autoref{fig:full_needle} presents the results of all methods on the \textsf{Needle-in-a-haystack} dataset. 

Notably, \textsc{Snap} encounters retrieval failures as the document length increases. While methods such as \textsc{ShadowKV}, \textsc{Flat}, \textsc{MagicPIG}, and \textsc{Quest} demonstrate strong performance on this dataset, their degradation in both accuracy and speed has already been analyzed in the context of more complex datasets, such as \textsf{RULER}, in previous evaluations.

\begin{figure*}[!htbp]
    \centering
    \includegraphics[width=\linewidth]{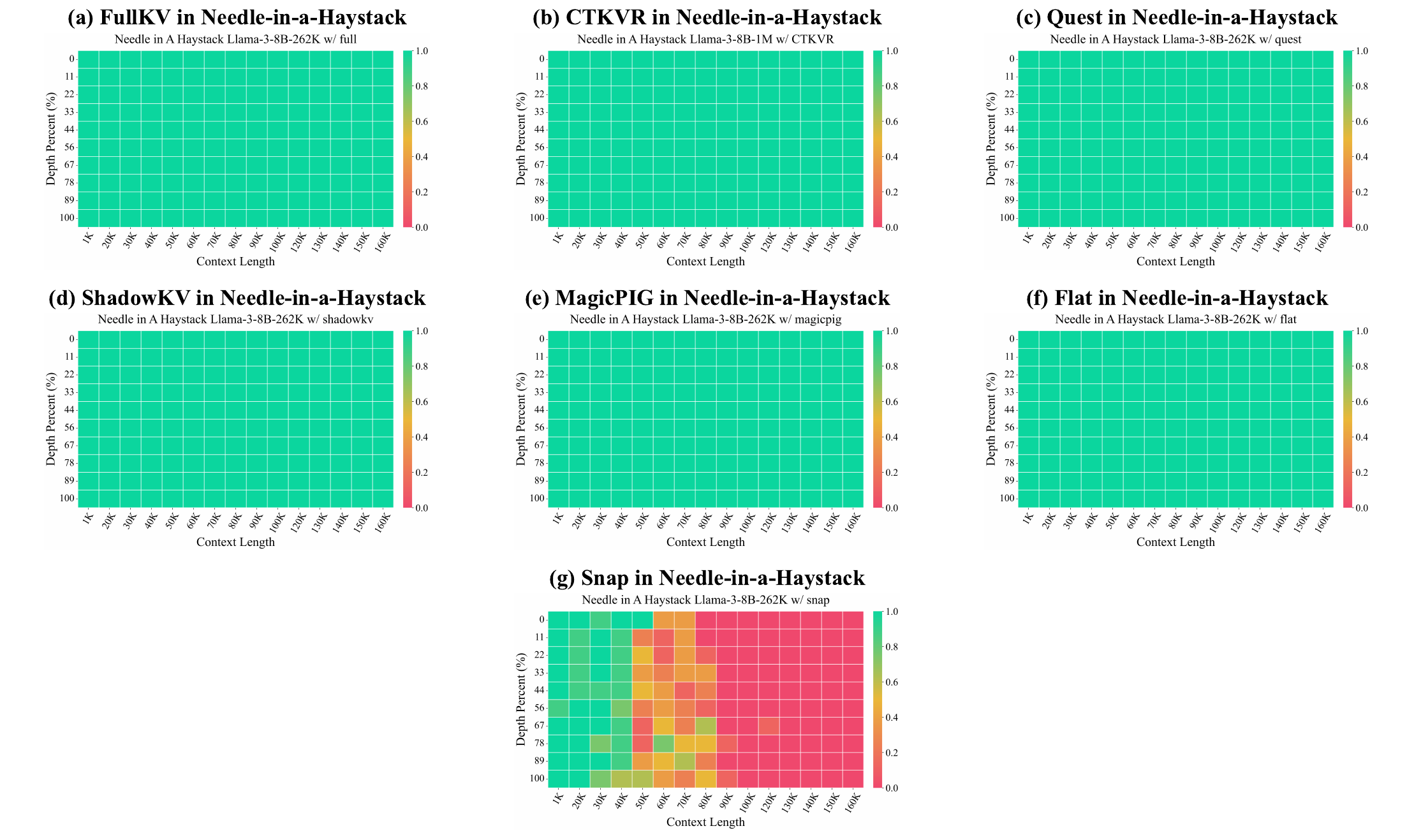}
    \caption{Performance of different methods on \textsf{Needle-in-a-haystack}.}
    \label{fig:full_needle}
\end{figure*}





\subsection{Full Score Results on \textsf{LongBench}} \label{sec:appendix_longbench}

In Section \ref{sec:acc}, we reported category-level scores for \textsf{LongBench}, which were computed as the average performance across all tasks within each category. In this section, we provide a detailed breakdown of \model{}'s performance on individual tasks in \textsf{LongBench}.

As shown in \autoref{tb:full_longbench}, \model{} demonstrates consistently strong performance across all tasks, maintaining stable accuracy within 1\% of \textsc{FullKV} attention or even surpassing it in specific tasks such as \textit{Qasper} and \textit{LCC}.
These results further highlight the robustness and effectiveness of \model{} in handling diverse long-context scenarios.

\begin{table*}[!htbp]
\centering
\footnotesize
\begin{tblr}{
    colspec = {c c c c c c c c c c c c c c c c}, 
    cell{3}{1} = {r=8}{},
    cell{11}{1} = {r=8}{},
    cell{1}{2} = {r=2}{},
    cell{1}{16} = {r=2}{},
    cell{1}{3} = {c=3}{},
    cell{1}{6} = {c=3}{},
    cell{1}{9} = {c=3}{},
    cell{1}{12} = {c=2}{},
    cell{1}{14} = {c=2}{},
    row{1} = {font=\bfseries},
  vline{2,3,6,9,12,14,16} = {-}{},
  hline{1,3,4,9,11,12,17,19} = {-}{},
    row{9,10,17,18} = {bg=lightblue},
    row{3,11} = {bg=mygray},
  cell{3,11}{1} = {bg=white},
    colsep=1.3pt,
    rowsep=1pt,
}
& Method & \textbf{S-Doc} & & & \textbf{M-Doc} & & & \textbf{Summ} & & & \textbf{Few-Shot} & & \textbf{Code} &  & AVG. \\
&  & \textit{NQA} & \textit{Multi\_en} & \textit{Qasper} & \textit{HQA} & \textit{Musique} & \textit{2wiki} & \textit{Multinews} & \textit{Qmsum} & \textit{Gov} & \textit{Trec} & \textit{TQA} & \textit{Lcc} & \textit{Repobench} &  \\
\rotatebox{90}{\texttt{Llama-3-8B-262K}} & \textsc{FullKV} & 16.46 & 43.41 & 27.53 & 28.08 & 14.94 & 22.44 & 27.97 & 25.51 & 34.50 & 70.50 & 87.57 & 52.00 & 46.99 & 38.30 \\
    & \textsc{Snap} & 1.51 & 11.00 & 3.75 & 7.49 & 3.64 & 7.62 & 18.28 & 7.23 & 11.55 & 47.25 & 32.98 & 34.71 & 19.40 & 15.88 \\
    & \textsc{MagicPIG} & 9.89 & 28.87 & 15.74 & 11.16 & 6.08 & 11.53 & 27.19 & 20.08 & 33.10 & 69.50 & 86.66 & 52.03 & 45.60 & 32.11 \\
    & \textsc{Quest} & 15.97 & 47.31 & 29.00 & 27.28 & 14.11 & 18.51 & 22.10 & 24.96 & 34.06 & 68.67 & 89.00 & 53.25 & 47.26 & 37.81 \\
    & \textsc{ShadowKV} & 15.81 & 49.34 & 27.04 & 26.45 & 13.63 & 13.57 & 32.56 & 25.64 & 33.94 & 70.50 & 87.40 & 52.10 & 46.20 & 38.01 \\
    & \textsc{Flat} & 16.07 & 44.88 & 28.88 & 35.09 & 14.20 & 22.74 & 27.14 & 25.48 & 34.15 & 70.50 & 86.20 & 53.01 & 47.50 & 38.91 \\
    & \textsc{\model{}}$_{512}$ & 15.74 & 44.21 & 28.70 & 25.32 & 14.23 & 24.21 & 28.02 & 25.58 & 34.35 & 70.00 & 86.24 & 53.55 & 47.27 & \textbf{38.26} \\
    & \textsc{\model{}}$_{1024}$ & 16.06 & 44.71 & 28.27 & 25.94 & 14.16 & 23.49 & 27.68 & 25.20 & 34.05 & 70.00 & 86.24 & 53.09 & 47.48 & 38.18 \\ \hline
\rotatebox{90}{\texttt{Llama-3-8B-262K}} & \textsc{FullKV} & 12.12 & 33.76 & 38.23 & 51.84 & 27.99 & 35.80 & 26.71 & 20.27 & 30.59 & 77.00 & 90.00 & 72.26 & 67.35 & 44.92 \\
    & \textsc{Snap} & 4.53 & 11.51 & 10.44 & 11.38 & 6.51 & 12.50 & 20.31 & 6.43 & 7.14 & 62.25 & 42.51 & 29.59 & 20.73 & 18.91 \\
    & \textsc{MagicPIG} & 10.68 & 32.67 & 36.67 & 51.33 & 27.35 & 35.96 & 23.57 & 20.55 & 30.11 & 77.00 & 90.32 & 71.97 & 66.31 & 44.19 \\
    & \textsc{Quest} & 13.01 & 33.04 & 38.52 & 50.76 & 26.98 & 34.42 & 28.56 & 20.80 & 28.55 & 77.00 & 89.67 & 71.43 & 65.49 & 44.48 \\
    & \textsc{ShadowKV} & 12.00 & 39.11 & 38.58 & 51.60 & 27.24 & 12.13 & 30.95 & 21.22 & 30.97 & 77.00 & 90.22 & 72.70 & 65.91 & 43.82 \\
    & \textsc{Flat} & 14.04 & 34.58 & 38.44 & 50.62 & 27.16 & 36.18 & 26.43 & 21.67 & 30.76 & 77.00 & 90.22 & 70.93 & 67.31 & 45.03 \\
    & \textsc{\model{}}$_{512}$ & 13.40 & 33.88 & 38.89 & 51.12 & 26.92 & 36.43 & 26.76 & 21.50 & 30.20 & 77.00 & 90.22 & 72.31 & 67.13 & \textbf{45.06} \\
    & \textsc{\model{}}$_{1024}$ & 13.91 & 33.67 & 38.74 & 51.37 & 27.19 & 35.68 & 27.04 & 21.47 & 30.21 & 77.00 & 90.22 & 71.43 & 67.27 & 45.02 \\
\end{tblr}
\caption{Accuracy comparison of different methods across each tasks in \textsf{LongBench}.}
\label{tb:full_longbench}
\end{table*}

\subsection{Computational Cost and Accuracy Tradeoff in \model{}} \label{sec:tradeoff}
As shown in \autoref{tb:runtime}, we benchmark the runtime of LLaMA-3-8B on context length of 96k and test computation overhead for each component . We find that \model{} spends most time on CPU operations, especially in $Q2K$ Rerank and attention computation. This indicates that the main bottleneck lies in CPU components rather than clustering locating.

\begin{table*}[!htbp]
\centering
\footnotesize
\begin{tblr}{
  colsep=2pt, rowsep=0.8pt,
  column{2-5} = {c},
  hline{1,2,3} = {-}{},
  vline{2} = {-}{},
}
\textbf{Operation} & \textbf{$Q2Q^{(c)}$ Recall} & \textbf{$Q2K$ Rerank} & \textbf{CPU Attention} & \textbf{GPU Attention} \\
\textbf{Time (ms)} & 5.57 & 35.68 & 15.80 & 2.84 \\
\end{tblr}
\caption{Runtime of different module in \model{}.}
\label{tb:runtime}
\end{table*}

To further explore the trade-off between computation and accuracy under different indexing settings, we evaluate \model{} on the \textsf{ruler\/multi\_key\_3} dataset with sparse budget of 512 and 96k context length, varying the number of maintained centroids $C$ and retrieved centroid $C'$. As shown in \autoref{tb:trade1} and \autoref{tb:trade2}, increasing either $C$ or $C'$ reduces throughput, as $C$ affects $Q2Q^{(c)}$ Recall time and $C'$ impacts $Q2K$ Rerank time. Throughput drops more sharply with higher $C'$, since the CPU is the system bottleneck. Meanwhile, accuracy increases with higher values of $C$ and $C'$, stabilizing near the full KNN result at $C=320$ and $C'=5$.

\begin{table*}[!htbp]
\centering
\footnotesize
\begin{tblr}{
  colsep=2pt, rowsep=0.8pt,
  column{2-9} = {c},
  hline{1,2,3,4} = {-}{},
  vline{2,9} = {-}{},
}
\textbf{Centroid Num $C$} & \textbf{20} & \textbf{40} & \textbf{80} & \textbf{160} & \textbf{320} & \textbf{640} & \textbf{1280} & \textbf{1920} \\
\textbf{Throughput (Tokens/s)} & 16.76 & 16.71 & 16.67 & 16.51 & 16.47 & 16.43 & 16.40 & 16.35 \\
\textbf{Accuracy (Acc)} & 0.4167 & 0.3020 & 0.4791 & 0.6250 & 0.8020 & 0.8645 & 0.8645 & 0.8645 \\
\end{tblr}
\caption{Impact of the number of selected centroids $C$ on throughput and accuracy.}
\label{tb:trade1}
\end{table*}

\begin{table*}[!htbp]
\centering
\footnotesize
\begin{tblr}{
  colsep=2pt, rowsep=0.8pt,
  column{2-8} = {c},
  hline{1,2,3,4} = {-}{},
  vline{2,8} = {-}{}
}
\textbf{Retrieved Centroid Num $C'$} & \textbf{2} & \textbf{4} & \textbf{6} & \textbf{8} & \textbf{10} & \textbf{12} & \textbf{14} \\
\textbf{Throughput (Tokens/s)} & 17.47 & 16.41 & 15.46 & 14.25 & 12.89 & 11.47 & 10.14 \\
\textbf{Accuracy (Acc)} & 0.8854 & 0.8958 & 0.8988 & 0.8990 & 0.8990 & 0.8990 & 0.8990 \\
\end{tblr}
\caption{Effect of the number of retrieved centroids $C'$ on throughput and accuracy.}
\label{tb:trade2}
\end{table*}

\end{document}